\documentclass{article}
\pdfoutput=1



    \usepackage[preprint]{neurips_2022}



\usepackage[utf8]{inputenc} 
\usepackage[T1]{fontenc}    
\usepackage{url}            
\usepackage{booktabs}       
\usepackage{amsfonts}       
\usepackage{nicefrac}       
\usepackage{microtype}      
\usepackage{xcolor}         
\usepackage{verbatim}
\usepackage[ruled,linesnumbered]{algorithm2e} 
\usepackage{amsthm}
\usepackage {bbm}
\usepackage{titlesec}
\usepackage{enumitem}
\usepackage{subfigure}
\usepackage{tabularx}
\usepackage{amsmath,amssymb,amsfonts}
\usepackage{graphicx}
\usepackage{makecell}
\usepackage{multirow}

\usepackage{url}
\usepackage{natbib}

\newtheorem{theorem}{Theorem}

\newtheorem{lemma}[theorem]{Lemma}

\title{\texttt{\textsc{FadMan}}: Federated Anomaly Detection across Multiple Attributed Networks}

%

\author{%
     Nannan Wu \\
  Tianjin University\\
  China\\
  \texttt{nannan.wu@tju.edu.cn} \\
   \And
   Ning Zhang \\
   Tianjin University \\
   China\\
   \texttt{2020244106@tju.edu.cn} \\
   \AND
   Wenjun Wang \\
   Tianjin University \\
   China\\
   \texttt{wjwang@tju.edu.cn} \\
   \And
   Lixin Fan \\
   WeBank \\
   China\\
   \texttt{lixinfan@webank.com} \\
   \And
   Qiang Yang \\
   WeBanky \\
   China\\
   \texttt{qyang@cse.ust.hk} \\
}

\begin{document}

\maketitle
\vspace{-6mm}

\begin{abstract}
 Anomaly subgraph detection has been widely used in various applications, ranging from cyber attack in computer networks to malicious activities in social networks.
 Despite an increasing need for federated anomaly detection across multiple attributed networks, only a limited number of approaches are available for this problem. 
 Federated anomaly detection faces two major challenges.
 One is that isolated data in most industries are restricted share with others for data privacy and security.  
 The other is most of the centralized approaches training based on data integration. 
 The main idea of federated anomaly detection is aligning private anomalies from local data owners on the public anomalies from the attributed network in the server through public anomalies to federate local anomalies.
 In each private attributed network, the detected anomaly subgraph is aligned with an anomaly subgraph in the public attributed network.
 The significant public anomaly subgraphs are selected for federated private anomalies while preventing local private data leakage.
 The proposed algorithm \texttt{\textsc{FadMan}} is a vertical federated learning framework for  public node aligned with many private nodes of different features, and is validated on two tasks --- correlated anomaly detection on multiple attributed networks and anomaly detection on an attributeless network --- using five real-world datasets.
  In the first scenario,  \texttt{\textsc{FadMan}} outperforms competitive methods by at least 12\%  accuracy at 10\% noise level.
 In the second scenario, by analyzing the distribution of abnormal nodes, we find that the nodes of traffic anomalies are associated with the event of \textit{postgraduate entrance examination} on the same day.
\end{abstract}

\section{Introduction}
\begin{figure}[ht]
\setlength{\abovecaptionskip}{0.1cm}
\setlength{\belowcaptionskip}{-0.1cm}
	\centering
	\includegraphics[width=14cm,height=7.5cm]{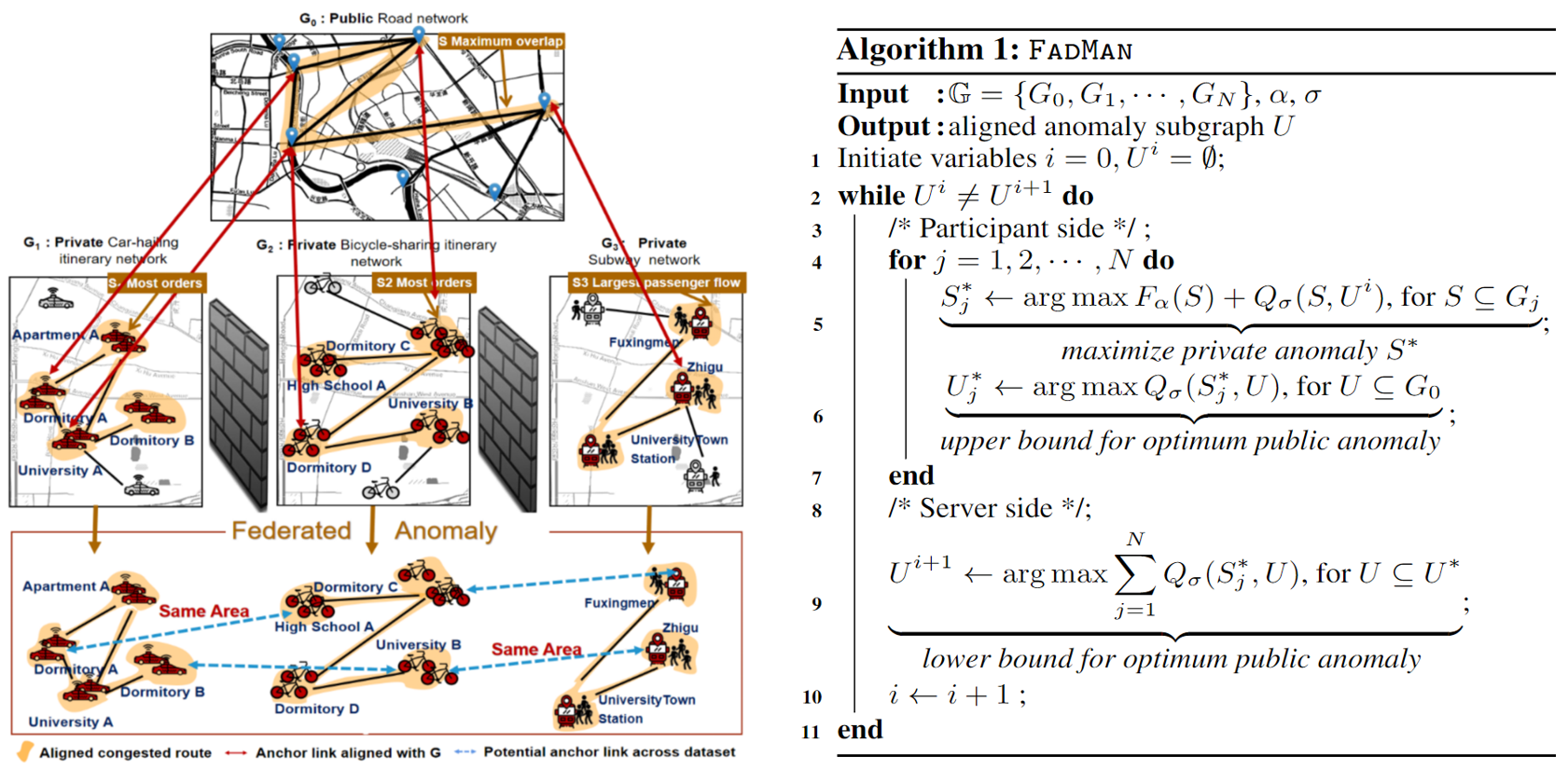}
	\vspace{-5mm}
    \caption{
    \textbf{Federated anomaly passenger flow} itineraries with similar structures in three private \textbf{industry traffic datasets} (Car-hailing \& Bicycle-sharing \& Subway). The federated anomaly is highly correlated (blue lines) in geographic attributes and specific locations (yellow shades).}
    \label{fig:example}
    \vspace{-5mm}
\end{figure}

The anomaly detection problem  has recently attracted much more attention. Many machine learning methods have been proposed to spot anomalies in different scenarios, such as disease outbreak detection in health alert networks, traffic jam detection in road networks, and event detection in social networks~\citep{kulldorff1997spatial,speakman2015scalable,speakman2013dynamic,takahashi2008flexibly,wu2018nonparametric,chen2019anomaly,parkinson2018graphbad}.
Most of the current methods are just used to centralized data.
However, most industry data exist in the form of isolated islands~\citep{YangLCT19}. 
Few solutions are proposed to the federated anomaly detection problems that are applied among multiple attributed networks without compromising user privacy~\footnote{In this paper, we use `graph' and `network', `vertex' and `node', and `attribute' and `feature’ interchangeably.
\vspace{-9.7mm}}.
Recently, anomaly detection methods are proposed to multiple networks without considering data privacy~\citep{bindu2017discovering,szmit2012usage}. 
These methods are focused on anomalies with the same anomaly characteristics on multiple attributed networks in a specific field, and  the accuracy improvement of the overall anomaly detection~\citep{Ying2020anomaly,futamura2020spectral}.
The work~\citep{bindu2017discovering} focuses on multi-layer social networks and uses hierarchical information to assist the overall abnormal ranking of nodes, which lacks analyses of the correlation  abnormalities among different social networks.

Attributed graph becomes more popular in industry data, which exhibit graph structure and node feature heterogeneity. 
We find that different graph anomalies, even from the same industry or same dataset, are non-IID on both structures and node features. Most industries protect the heterogeneity for the private reasons, e.g., a car-hailing company protects its travel demand network heterogeneity. It is a challenge to find the anomaly associations on multiple private attributed networks,  
and few further explorations have been carried out on multiple private attributed networks for the federated anomaly detection~\citep{zhao2019multi,chen2019network,ying2021network,zhao2020network}. 
In Figure~\ref{fig:example}, our proposed \texttt{\textsc{FadMan}} pays attention to federated anomalies of multiple private attributed networks and their potential correlations in various fields without compromising data privacy.
We can observe that the ``\textit{the largest passenger flow}'' subway station network aligning with  ``\textit{the most car-hailing and bicycle-sharing orders}'', which means that most people arrive or leave at a subway station with transfer to bicycle or car at the certain relevant location. 
In $n$ industries, there are multiple private attributed networks, $\{G_1,\cdots,G_N\}$. A well-known cost function $F_{\alpha}(S)$ for $S \subseteq G_i$ can be specifically modeled in the $i$th industry data to detect the anomaly. We select a public attributed network $G_0$, and present an alignment function $Q_\sigma(S, U)$ for $U \subseteq G_0$ to align the private anomaly $S$ on the public anomaly $U$. 
$F$ is the abnormal score of $S$ (e.g., the work Tree-Shape-Priors Subgraph Detection (TSPSD)~\citep{wu2018nonparametric} as $F$). 
$Q$ is the alignment score of $S$ and $U$ (e.g., the work  Cross-Network Embedding for Multi-Network Alignment (CrossMNA)~\citep{chu2019cross} as $Q$). 
The parameters $\alpha$ and $\sigma$ are significant level and alignment threshold respectively.

\textbf{Related work.} Anomaly detection has been extensively studied from outlier detection, anomaly subgraph detection in one attributed network and multiple attributed networks to federated anomaly detection in multiple isolated attributed networks. Most of these algorithms are used in a single attributed network~\citep{cadena2018graph,bhatia2020midas,cai2020structural,boniol2020series2graph}. A few algorithms are used in multi-layer networks~\citep{bindu2017discovering,de2015ranking,tam2019anomaly,jie2020framework} for abnormal nodes. 
The methods based on network alignment are used to multi-layer networks for anomaly subgraph detection~\citep{Ying2020anomaly,zhang2015cosnet,liu2014hydra,heimann2018regal,ye2019vectorized,huynh2019network,yan2021bright,sun2019dna,wang2018deepmatching,meng2019deep}. Many federated learning methods on graphs are proposed to explore graph embedding vectors average~\citep{xie2021federated,li2020federated,li2021ditto,zhang2021subgraph}. However, the transmitted gradient parameters are still at risk of data leakage~\citep{NEURIPS2019_60a6c400,lyu2020threats,melis2019exploiting}. There are approaches that tackle statistical heterogeneity by considering local partial data or some server-side proxy data as model parameters~\citep{li2020federatedc,xie2021federated}. We are motivated on private anomalies aligning on public anomaly  to protect data privacy and federate local anomalies.
We summarize our main contributions as follows:
\vspace{-1mm}
\begin{itemize}[leftmargin=*]
\item {\bf Innovative work.} A novel pioneer algorithm is first proposed to federated anomaly detection on multiple attributed networks in the vertical federated learning  without compromising data privacy. 
\item {\bf Theoretical guarantee.} The proposed algorithm enjoys a proved convergence property. A broad spectrum of anomaly detection functions is applicable under satisfying the function properties. Specifically, for the graph scan statistics (e.g., Berk-Jones), the algorithm succeeds in federated anomaly detection with car-hailing, bicycle-sharing and subway industry data.
\item {\bf Effectiveness and robustness.} Extensive experiments on real datasets have verified that \texttt{\textsc{FadMan}} can be effectively applied to different multi-network scenarios. 
On the real computer network dataset, our method achieves 97\% accuracy at the ten percent noise level, which is 12\% higher than the  competitive methods in accuracy.
\end{itemize}

\section{Problem formulation}
A data owner has an attributed network $G = (V, E, P)$, $V=\{v_1,\cdots,v_n\}$,  $E\subseteq V\times V$ are sets of vertices and edges in $G$, and $P \in \mathbb{R}^n$ is the specific anomaly feature set of $G$. $P$ is widely obtained by the mapping function $\mathbf{p}: V \rightarrow[0, 1]$ defines the empirical p-value corresponding to each node $v \in V$ \citep{wu2018nonparametric,chen2014non}, the smaller the p-value, the more abnormal the node.
Anomaly target detection can be represented as identifying subgraphs $S\subseteq G$  whose vertex set $V_{S}$ and edge set $E_{S}$ are subset of $V$, $E$.  

We consider a set of data owners denoted by $\{G_1,G_2,\ldots, G_N\}$ as multiple attributed networks. One global anomaly $\{S_1,S_2,\ldots, S_N\}$ (e.g., anomaly transportation in a city) is distributed as local anomalies on multiple attributed networks. We denote $[N]$ as the set of $\{1,2,\cdots,N\}$. The local anomaly $S_i$ for $i  \in [N]$ can be identified by maximizing the local model $F_\alpha^i(S_i)$ with the significant level $\alpha$. 
We introduce a public data owner $G_0$ to bridge the  gap among isolated private data owners.
A public anomaly  $U \subseteq G_0$ instead of the global anomaly is assoicated with local anomalies. An local alignment function $Q_\sigma^i(S_i, U)$ is employed to measure the similarity between $S_i$ and $U$ for $i \in [N]$.
For anomaly subgraphs $S_i\subseteq G_i$ and $U \subseteq G_0$, each private data owner exclusively defines the alignment matrix $A^i \in \{0,1\}^{n \times n}$, where $A_{uw}^i \leftarrow 1$ if the value of $(u,w)$ is greater than the alignment threshold $\sigma$ (i.e., a constant) with $u \in V_{S_i}$ and $w \in V_{U}$, $A_{uw}^i \leftarrow 0$ otherwise.

\textbf{Federated anomaly detection problem.}
The federated anomaly detection across multiple attributed networks can be formulated as follows:
\begin{equation}
\label{L}
(S^*, U^*) = \arg\max_{U \subseteq G_0} \sum_{i=1}^N \max_{S_i \subseteq G_i} F_{\alpha}^i(S_i) + Q_{\sigma}^i(S_i, U)
\end{equation}
where the private anomaly subgraphs $S^*$ and the public anomaly subgraph $U^*$ are the optimal solutions. The private anomaly subgraph $S_i \subseteq G_i$ is measured by $F$ with the significant level $\alpha$ (e.g., 0.15). The alignment score between $S_i$ and the public anomaly $U$ is measured by $Q$ with the predefined alignment threshold $\sigma$ (e.g., 0.8).
For the optimization problem~(\ref{L}), we can assume that the desired anomalies $\{S_i\}$, $U$, and the functions $F_\alpha^i$, $Q_\sigma^i$, intuitively satisfy  the  four  properties:

\noindent \textbf{     (P1)} $F_\alpha^i$ is monotonically \textbf{\textit{increased}} with the number of abnormal nodes in $S_i$;

\vspace{-1mm}
\noindent \textbf{     (P2)} $F_\alpha^i$ is monotonically \textbf{\textit{decreased}} with the number of normal nodes in $S_i$;
    
\vspace{-1mm}
\noindent \textbf{     (P3)} $Q_\sigma^i$ is monotonically \textbf{\textit{increased}} with the number of node pairs aligned between $S_i$ and $U$;

\vspace{-1mm}
\noindent \textbf{     (P4)} $Q_\sigma^i$ is monotonically \textbf{\textit{decreased}} with the number of nodes from $S_i$ and $U$.

These properties  P1-P2 are widely used in connected anomaly subgraph detection~\citep{chen2014non,wu2018nonparametric}, and the possibility of same anomaly is increased with aligned links P3~\citep{Ying2020anomaly}. 
We can observe that when the anomaly $U$ is the whole graph $G_0$, there are maximum links between $S_i$ and $U$  by the property P3. The anomaly $U$ is not desired for including more noise (i.e., normal nodes). Similarly, we have $S_i$. In this work, we include the property P4, which restricts on the smaller sizes of anomalies $S_i$ and $U$.

\section{Proposed algorithm}
The federated anomaly detection problem~(\ref{L}) does not explicitly allow for data protection in different data owners. We propose the algorithm that is tractable to protect data and  detect federated anomaly.
The key idea of proposed algorithm is to aggregate local public anomalies and then re-distribute the optimized public anomaly back to each local data owner. 

\subsection{Federated anomaly detection}
The main steps of the  \texttt{\textsc{FadMan}} in Algorithm 1 are described as follows:

For the $i$-th participant side (local data owner), given the optimized public anomaly $U$,




(c1) $S_i^* \leftarrow \arg\max_{S \subseteq G_i} F_\alpha^i(S) + Q_\sigma^i(S, U)$

(c2) $U_i^* \leftarrow \arg\max_{U \subseteq G_0} Q_\sigma^i(S_i^*, U)$

For the server side, aggregate local public optimized anomalies $\{U_i^*\}$ for $i \in [N]$,

(s1) Sort the local data owner $U_{(i)}^*$ in ascending order of size $|V_{U_{(i)}^*}|$

(s2) For a coalition $C=\{\}$, start at the owner $i$ joining the coalition $C$, and stop when the first owner  would increase the alignment error  $(\sum_{k \in C} |V_{U^*_k}|-|V_{U^*_{(i)}}|)/(|V_{U^*_{(i)}}| \sum_{k \in C} |V_{U^*_k}|)  Q_\sigma^i(S_{(i)}^*, U_{(i)}^*)$ by joining the coalition $C$. Then, the optimal partition of $[N]$ is made up of the coalition set $\{C\}$. 

(s3) Select a coalition $C$ with the minimum score, and set the new public anomaly $U=\bigcup_{i \in C} U_{(i)}^*$

We can observe that the private data $\{S_i\}$ at the local data owners would not be uploaded to the server. At the server side, we can just obtain the public anomalies $\{U_i^*\}$ and its alignment scores.
The problem is solved with two different settings: \textit{participant side} for private anomaly detection has $N$ data owners who do not share data with each other; and \textit{server side} for public anomaly alignment has the public attributed network which is shared by the data owners. 
Algorithm 1 is iterated between the participant side and the server side, until the stop condition is  satisfied (i.e., the public anomaly subgraph $U$ does not change).
\subsubsection{Private anomaly detection.}
Each data owner employs a function $F$ to detect self-defined anomaly by consolidating their private attributed network data. The self-defined anomaly at step 5 and step 6 of Algorithm 1 is detected exclusively in the data owner setting. The anomaly detection provides meaningful privacy guarantees.
In this paper, we employ the non-parametric graph scanning statistic $F$ as a score function, and its form is defined as follows:
\begin{equation}
\label{funobj}
    F_\alpha(S) = \varphi(\alpha,N_\alpha(S),N(S)). 
\end{equation}
where $S$ is a set of connected vertices, that is, a subgraph, and $\alpha$ is the significant level,  $N_\alpha(S)$ is the number of anomaly vertices in $S$ whose p-value is less than or equal to $\alpha$, and $N(S)$ is the total number of vertices in $S$.
In this paper, we consider two non-parametric graph scanning statistics as the score function~(\ref{funobj}): Berk-Jones (BJ) statistic \citep{bj} and Higher Criticism (HC) statistic \citep{hc}.
We use non-parametric graph scanning statistics to measure the subgraph abnormality as a numerical value. The optimal anomaly subgraphs maximize the graph scanning statistics $F$ over each private network.

\subsubsection{Public anomaly alignment.}
Given the public attributed network $G_0$ and another attributed network $G_i$ for $i \in \{0,1,\cdots,N\}$, we define the function $Q$ as measuring the anomaly alignment score between $S \subseteq G_i$ and $U \subseteq G_0$:
\begin{equation}
    Q_{\sigma}(S, U) = \frac{N_\sigma(S, U)}{N(S)} +  \frac{N_\sigma(S, U)}{N(U)}
\end{equation}
where $\sigma$ is the predefined alignment threshold. The anomaly subgraph $S$ is aligned on the public anomaly subgraph $U$. The two subgraphs are connected. 
$N_\sigma(S, U)$ is the number of node pairs between $S$ and $U$ whose alignment probability is greater than or equal to $\sigma$. $N(S)$ and $N(U)$ are the number of all nodes in $S$ and $U$ respectively.
The node's alignment probability is obtained through the network alignment work CrossMNA \citep{chu2019cross}. By introducing this algorithm, we pre-aligned each private network with the public network, and obtained the alignment probability of all pairs of nodes between them. We use network alignment to map the similarity between the subgraphs to a value, and obtain the most similar part between  two subgraphs by maximizing $Q$ at step 6, 9 of Algorithm 1.

\subsection{Theoretical analysis}


We now analyze the increasing property in the objective when one iteration of the proposed algorithm is performed. The convergence of \texttt{\textsc{FadMan}} is proved based on several lemmas and the four properties of objective function, without the specific forms of $F$ and $Q$ functions. Each lemma is a building-block that describes that there exists an local optimal solution.
\begin{theorem}[Convergence and Optimality of \texttt{\textsc{FadMan}}]
\label{theorem:main}
Within the pre-aligned domain (each data owner integrates the alignment probability information of all node pairs between its private network and the public network), given the parameter $\alpha$ and $\sigma$ settings,  Algorithm 1 converges to the optimal solution of the problem~(\ref{L}). 
\end{theorem}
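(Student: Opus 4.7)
The plan is to establish convergence via a monotonicity argument and then argue global optimality from the fixed-point conditions combined with properties P1--P4. First I would introduce the potential $\Phi^{(t)} := \sum_{i=1}^N F_\alpha^i(S_i^{(t)}) + Q_\sigma^i(S_i^{(t)}, U^{(t)})$ evaluated along the iterates of Algorithm~1, and show $\Phi^{(t+1)} \geq \Phi^{(t)}$ by splitting each iteration into a participant-side half-step and a server-side half-step. Step (c1) explicitly sets $S_i^{(t+1)} = \arg\max_{S \subseteq G_i} F_\alpha^i(S) + Q_\sigma^i(S, U^{(t)})$, so the first inequality is by definition of the argmax. For the server-side half-step I would show that the union $\bigcup_{i \in C} U^*_{(i)}$ selected by (s1)--(s3) maximizes $\sum_i Q_\sigma^i(S_i^{(t+1)}, U)$ over admissible $U \subseteq G_0$; since $F_\alpha^i$ is independent of $U$, this yields the second inequality.

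Once $\Phi^{(t)}$ is monotone, convergence in finitely many iterations is immediate: the search space of subgraphs of the finite graphs $G_0, G_1, \ldots, G_N$ is finite, $\Phi^{(t)}$ is bounded (by P1--P4 it cannot increase indefinitely, since P2 and P4 penalize enlarging $S_i$ and $U$ with irrelevant nodes), and a monotone sequence on a finite range must stabilize. Let $(S^*, U^*)$ denote the terminal configuration. For optimality I would verify that $(S^*, U^*)$ satisfies the joint first-order conditions for (\ref{L}): given $U^*$, each $S_i^*$ maximizes $F_\alpha^i(\cdot) + Q_\sigma^i(\cdot, U^*)$ by (c1); given $\{S_i^*\}$, $U^*$ maximizes $\sum_i Q_\sigma^i(S_i^*, \cdot)$ by the server-side step. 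Because the objective in (\ref{L}) decouples as $\max_U \sum_i \max_{S_i}[F_\alpha^i(S_i) + Q_\sigma^i(S_i, U)]$ and the inner maxima are already realized at $\{S_i^*\}$ for $U^*$, the outer maximum is also realized, giving global optimality.

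The main obstacle will be the server-side claim: proving that the greedy coalition built in (s1)--(s3) actually maximizes $\sum_i Q_\sigma^i(S_i^*, U)$ rather than only approximating it. Properties P3 and P4 pull in opposite directions --- P3 rewards including more aligned node pairs, while P4 penalizes the total size of $U$ --- so a rigorous argument must show that the stopping rule in (s2) precisely balances these two monotonicities. I would attempt a discrete exchange argument: suppose some $U'$ achieves a strictly larger sum; then either $U'$ contains a node outside $\bigcup_{i \in C} U^*_{(i)}$, which by P3--P4 would have been accepted into the coalition and contradicts the stopping criterion, or $U'$ omits a node from that union, which by P3 would lower the aligned-pair count more than P4 raises the size-penalty, again a contradiction. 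Careful handling of ties and verifying that the alignment-error surrogate in (s2) agrees in sign with the change in $\sum_i Q_\sigma^i$ are the delicate bookkeeping steps I expect to dominate the proof; once this lemma is in hand, the theorem follows by chaining monotonicity, finite convergence, and the fixed-point characterization.
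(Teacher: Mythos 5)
Your proposal follows essentially the same route as the paper's own proof: both establish monotone increase of the objective $\sum_i F_\alpha^i(S_i)+Q_\sigma^i(S_i,U)$ by alternating the participant-side argmax (c1) with a server-side maximization of $\sum_i Q_\sigma^i(S_i^*,\cdot)$ restricted to $\bigcup_i U_i^*$ --- your discrete exchange argument using P3/P4 is exactly the contradiction argument the paper uses in its ``$Q$ optimum solution'' lemma, and the coalition stopping rule is handled there by citing the Donahue--Kleinberg partition theorem --- followed by termination on a finite search space and a fixed-point characterization. The one caution is that the final leap from ``each block is optimal given the other'' to global optimality of the bi-level problem~(\ref{L}) is asserted rather than derived in both your sketch and the paper: alternating maximization generically certifies only a coordinate-wise optimum, and your claim that realizing the inner maxima at the terminal $U^*$ implies the outer maximum over $U\subseteq G_0$ is also realized does not follow without additional structure, so you share the paper's gap rather than introducing a new one.
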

Though Theorem~\ref{theorem:main} looks straightforward to a network-based algorithm framework for federated anomaly detection, the problems at steps (c1) and (c2) can be addressed with the previous methods~\cite{Ying2020anomaly,wu2018nonparametric} support the optimal solutions. At the step (c1), given the public anomaly $U$, the optimal private anomaly $S_i^*$ is obtained. We are target for the new improved public anomaly $U$ at the next iteration. 

At the $t$-th iteration, for the server side, we can observe the public anomalies $\{U_i^*\}$ and the function values of $F$ and $Q$. We assume that there exists a public anomaly subgraph $\underline{U^* \subseteq G_0}$ maximizing $\sum_{i \in [N]} Q_\sigma^i(S_i^*, U^*)$ under the fixed $S_i^*$. The value of $F_\alpha^i(S_i^*)$ will not change with the update $U$.
\begin{lemma}[$Q$ upper bound]
We must have the upper bound of the optimal $Q$ under the fixed $S_i^*$.
\begin{equation}
\sum_{i=1}^N   Q_\sigma^i(S_i^*, U_i^*)
 \geq 
 \sum_{i=1}^N  Q_\sigma^i(S_i^*, U^*).
\end{equation}
\label{lmup}
\end{lemma}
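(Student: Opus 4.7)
The plan is to exploit the fact that each $U_i^*$ is, by construction in step (c2), the per-participant maximizer of $Q_\sigma^i(S_i^*,\cdot)$ over the public graph $G_0$, whereas $U^*$ is only the maximizer of the \emph{summed} objective $\sum_i Q_\sigma^i(S_i^*,\cdot)$ with a single common argument. Intuitively, allowing each index $i$ to pick its own best $U$ can only do at least as well as restricting all indices to share one $U$, so the inequality should follow termwise.

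Concretely, I would carry out three short steps. First, recall the definition $U_i^* \in \arg\max_{U \subseteq G_0} Q_\sigma^i(S_i^*,U)$ from step (c2), which immediately gives the pointwise bound $Q_\sigma^i(S_i^*,U_i^*) \geq Q_\sigma^i(S_i^*,U)$ for every admissible $U \subseteq G_0$. Second, instantiate this bound at the particular choice $U = U^*$, which is feasible because $U^* \subseteq G_0$ by hypothesis; this yields $Q_\sigma^i(S_i^*,U_i^*) \geq Q_\sigma^i(S_i^*,U^*)$ for each $i \in [N]$. Third, sum over $i \in [N]$ and use the fact that a sum of nonstrict inequalities preserves the direction, giving the claimed bound.

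The only subtlety worth verifying is that the feasibility domains align: both $U_i^*$ and $U^*$ are subgraphs of the same public network $G_0$, and the private anomalies $\{S_i^*\}$ are frozen during this lemma, so the $i$-th maximization in step (c2) is completely decoupled from the other participants. No structural assumption on $Q_\sigma^i$ is needed for this direction; in particular, properties P3 and P4 are not invoked here. The hard work — showing that the union $\bigcup_{i\in C} U_{(i)}^*$ produced by the server actually realizes or approximates $U^*$, and thus closes the gap in the reverse direction — is a separate matter that will presumably be handled by subsequent lemmas in the convergence argument.

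I therefore expect no genuine obstacle in this lemma itself; it is essentially the canonical ``sum of independent maxima dominates the maximum of the sum'' inequality, and the proof should fit in a few lines once the definitions from Algorithm 1 are quoted.
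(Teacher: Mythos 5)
Your proposal is correct and follows essentially the same route as the paper: both arguments note that $U_i^*$ maximizes $Q_\sigma^i(S_i^*,\cdot)$ over $G_0$ by step (c2), instantiate this at the feasible point $U^*\subseteq G_0$ to obtain the termwise inequality $Q_\sigma^i(S_i^*,U_i^*)\geq Q_\sigma^i(S_i^*,U^*)$, and then sum over $i\in[N]$. Your additional remarks on domain feasibility and the non-use of properties P3--P4 are accurate but do not change the substance.
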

\begin{lemma}[$Q$ lower bound]
A public anomaly $\underline{U^o \subseteq \bigcup_{i \in [N]} U_i^*}$ maximizes $\sum_{i \in [N]} Q_\sigma^i(S_i^*, U^o)$. We must have the lower bound of the optimal $Q$ under the fixed $S_i^*$.
\begin{equation}
 \sum_{i=1}^N   Q_\sigma^i(S_i^*, U^*)
 \geq 
 \sum_{i=1}^N   Q_\sigma^i(S_i^*, U^o).
 \label{eq:lower}
\end{equation}
\label{lmlow}
\end{lemma}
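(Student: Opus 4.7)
The plan is to prove Lemma~\ref{lmlow} by a direct ``nested feasible sets'' argument: since $U^o$ is obtained by maximizing the objective over a candidate pool strictly contained in the candidate pool for $U^*$, the optimum of the larger problem automatically dominates the optimum of the smaller one. First I would make the inclusion of feasible sets explicit. Each $U_i^*$ produced at step (c2) of Algorithm~1 is, by construction, a subgraph of the public network $G_0$, so $\bigcup_{i\in[N]} U_i^* \subseteq G_0$ as a subgraph. Any admissible candidate for the restricted optimization defining $U^o$, i.e.\ any $U \subseteq \bigcup_{i\in[N]} U_i^*$ satisfying whatever connectivity requirement is built into $Q_\sigma^i$, is therefore also an admissible candidate for the unrestricted optimization defining $U^*$ over subgraphs of $G_0$.

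Next I would invoke the optimality of $U^*$. By the assumption stated just before Lemma~\ref{lmup}, $U^*$ attains the maximum of $\sum_{i=1}^N Q_\sigma^i(S_i^*, U)$ over all $U \subseteq G_0$. Since $U^o$ lies in this larger feasible set, the objective value at $U^*$ must be at least the objective value attained at $U^o$, yielding
\begin{equation*}
\sum_{i=1}^N Q_\sigma^i(S_i^*, U^*) \;\geq\; \sum_{i=1}^N Q_\sigma^i(S_i^*, U^o),
\end{equation*}
which is precisely inequality~(\ref{eq:lower}). Notably, no appeal to properties P1--P4 is needed: the bound is a pure consequence of the monotonicity of the $\max$ operator over nested domains, and in particular it does not depend on the specific forms of $F_\alpha^i$ or $Q_\sigma^i$.

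The main obstacle is less a mathematical hurdle than a modeling check, which I would dispose of at the outset. I need to verify that the structural constraints implicit in the definition of $Q_\sigma^i$, namely connectedness of $U$ and respect for the pre-computed alignment matrix $A^i$, transfer under the inclusion $\bigcup_i U_i^* \subseteq G_0$. Because this inclusion is one of subgraphs rather than merely of vertex sets, no edges are dropped and every connected subgraph of $\bigcup_i U_i^*$ remains connected when viewed inside $G_0$; moreover, the alignment probabilities entering $A^i$ are defined on the full pair set $V_{G_i} \times V_{G_0}$, so the score $Q_\sigma^i(S_i^*, U)$ for any fixed $U$ does not depend on whether we regard $U$ as a subgraph of $\bigcup_i U_i^*$ or of $G_0$. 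Once this compatibility is noted, the set-inclusion argument above closes the proof in one line, making Lemma~\ref{lmlow} a natural companion to the upper bound in Lemma~\ref{lmup}.
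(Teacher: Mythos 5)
Your proposal is correct and matches the paper's own proof: both rest on the observation that $U^o$ ranges over the restricted pool $\bigcup_{i\in[N]} U_i^* \subseteq G_0$ while $U^*$ is by assumption the maximizer over all of $G_0$, so the nested-feasible-sets (monotonicity of $\max$) argument gives inequality~(\ref{eq:lower}) directly. Your additional check that connectivity and the alignment scores are unaffected by whether $U$ is viewed inside $\bigcup_i U_i^*$ or inside $G_0$ is a reasonable tightening of what the paper leaves implicit, but it does not change the route.
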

\begin{lemma}[$Q$ optimum solution]
The solution $U^o = U^*$ maximizes $Q$ under the fixed $S_i^*$ over $G_0$. The optimum solution must exist between
\begin{equation}
    \min_{U \subseteq \bigcup_{i \in [N]} U_i^*} \sum_{i=1}^N   \Big[Q_\sigma^i(S_i^*, U_i^*) -   Q_\sigma^i(S_i^*, U)\Big]
    \label{eqopt}
\end{equation}
\label{lmopt}
\end{lemma}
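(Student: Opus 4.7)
The plan is to prove the identity $U^{o}=U^{*}$ by sandwiching both sides through the two preceding lemmas and then invoking the defining optimality of each. First, I would chain Lemma~\ref{lmup} and Lemma~\ref{lmlow} to obtain
\begin{equation}
\sum_{i=1}^{N} Q_{\sigma}^{i}(S_i^*, U_i^*) \;\geq\; \sum_{i=1}^{N} Q_{\sigma}^{i}(S_i^*, U^*) \;\geq\; \sum_{i=1}^{N} Q_{\sigma}^{i}(S_i^*, U^o),
\end{equation}
which controls the value attained at $U^{*}$ from both sides in terms of the local optima $U_i^{*}$ and the restricted optimum $U^{o}$.

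Second, since $U^{*}\subseteq G_0$ is the global maximizer of $\sum_i Q_{\sigma}^{i}(S_i^*,\cdot)$ while $U^{o}$ is only a maximizer over the restricted domain $\bigcup_i U_i^* \subseteq G_0$, the inequality $\sum_i Q_{\sigma}^{i}(S_i^*, U^*) \geq \sum_i Q_{\sigma}^{i}(S_i^*, U^o)$ is automatic. The remaining task is to establish the reverse inequality; once obtained, equality is forced throughout and by extremality $U^{o}=U^{*}$, which can then be rewritten as the minimization in~(\ref{eqopt}) by subtracting both sides from $\sum_i Q_{\sigma}^{i}(S_i^*, U_i^*)$.

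Third, to obtain the reverse inequality I would argue that $V_{U^{*}} \subseteq \bigcup_i V_{U_i^{*}}$, so $U^{*}$ is actually a feasible candidate in the definition of $U^{o}$. The argument uses properties P3 and P4: any node $w\in V_{U^*}$ that is unaligned (alignment probability below $\sigma$) with every vertex of every $S_i^{*}$ contributes nothing to $N_{\sigma}(S_i^{*},U^{*})$ by P3, yet strictly inflates $N(U^{*})$, contradicting the maximality of $U^{*}$ through P4. So every $w\in V_{U^{*}}$ aligns with some $S_i^{*}$. Then invoking the local maximality defining $U_i^{*}$ --- each $U_i^{*}$ already absorbs every aligned node whose marginal P3-gain dominates its marginal P4-cost under the same monotone counting structure --- yields $V_{U^{*}}\subseteq \bigcup_i V_{U_i^{*}}$.

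The main obstacle will be the last step of the third stage. The objective $\sum_i Q_{\sigma}^{i}(S_i^*,U)$ is additive across $i$, so in principle a node $w$ that is individually too costly to include in any single $U_i^{*}$ could still be collectively beneficial when its alignment contributions across several data owners are summed. Handling this cleanly will likely require exploiting the decomposability of $N_{\sigma}(S_i^{*},U)$ as a sum over $w\in V_U$ of per-node alignment counts against $S_i^{*}$, together with the fact that the P4-size penalty $N(U)$ is charged only once per node and is shared across all $i$. A per-node marginal-contribution inequality of the form ``$w$ enters some $U_i^{*}$ iff its total marginal gain exceeds its single size penalty'' should close this gap and finish the proof.
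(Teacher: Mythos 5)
Your route is the same as the paper's: chain Lemma~\ref{lmup} and Lemma~\ref{lmlow} into a sandwich, observe that the gap-minimization in~(\ref{eqopt}) encodes the squeeze, and then reduce everything to showing $V_{U^*}\subseteq\bigcup_{i\in[N]}V_{U_i^*}$ by a two-step contradiction --- first pruning nodes of $U^*$ that align with no $S_i^*$ (via P4), then arguing that an aligned node lying outside every $U_i^*$ contradicts the local maximality of some $U_i^*$ (via P3). The paper's appendix proof performs exactly these two steps, writing $U=U^o\cup U^-$ and eliminating $U^-$.

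The obstacle you flag in your last paragraph is a genuine gap, and it is worth saying plainly that the paper's own proof does not close it either --- it simply asserts $Q_\sigma^i(S_i^*,U_i^*\cup\{v\})>Q_\sigma^i(S_i^*,U_i^*)$ ``by property (P3).'' That assertion is not warranted by the stated properties: adding $v$ increases the number of aligned pairs (P3 pushes $Q$ up) but also increases the node count of $U$ (P4 pushes $Q$ down), so the sign of the marginal change is undetermined; for the concrete $Q_\sigma(S,U)=N_\sigma(S,U)/N(S)+N_\sigma(S,U)/N(U)$ the net change is easily negative when $v$ carries few alignments relative to the average node of $U_i^*$. Your second worry compounds this: $U^*$ maximizes the \emph{sum} $\sum_i Q_\sigma^i(S_i^*,\cdot)$ while each $U_i^*$ maximizes a single summand, so a node retained in $U^*$ because its alignment gains are spread across several owners need not be profitable for any one owner, and picking ``the'' owner $i$ with which $v$ aligns (as the paper does) discards exactly this possibility. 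Your proposed per-node marginal-contribution inequality is the right shape of fix, but it needs more than P1--P4: you would need the decomposability of $N_\sigma(S_i^*,\cdot)$ over the vertices of $U$ together with a quantitative relation between the single-owner inclusion threshold and the aggregate one (or an added assumption that the P3 gain of any aligned node dominates its P4 cost). Without one of these, the inclusion $V_{U^*}\subseteq\bigcup_i V_{U_i^*}$ --- and hence $U^o=U^*$ --- does not follow from the stated hypotheses, so your proof as written remains incomplete at the same point where the paper's is.
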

The public anomaly $U_i^*$ is the local maximum solution. The $Q$ upper bound is derived intuitively from summing local maximum solutions is greater than the global maximum solution.
For the $Q$ lower bound, the maximum solution over the local set is less than or equal the maximum solution over the global set.
We can observe that the maximum solution must exist between the upper bound and lower bound, and minimize the gap between the two bounds. Note that when the gap is 0, the optimum solution is equal to the solution of lower bound. All of the local solutions $U^*_i$ are the same.

However, in the problem~(\ref{eqopt}), $Q_\sigma^i(S_i^*, U)$ can not be computed at the server because the forms of function $Q$ and private anomalies $\{S_i^*\}$ are not accessed for the data protection rules. We employ the framework for optimality and stability in federated learning to address the problem~(\ref{eqopt})~\cite{donahue2021optimality}. We define a coalition $C \subseteq [N]$ is a subset of local data owners, and all owners $[N]$ are partitioned into coalition set $\{C\}$. We consider the gap between two bounds~(\ref{eqopt}) as errors across owners. By the properties P3 and P4, we know that the value of $Q_\sigma^i(S_i^*, U)$ changed with the size of $U$ for the fixed $S_i^*$. Given a partition $\Pi$ of $[N]$, we define its error below:
\begin{equation}
  f_{\{U_i^*\}}(\Pi) = \sum_{C \in \prod} \sum_{i \in C} |V_{U_i^*}| \cdot \bigg(\frac{\sum_{k \in C} |V_{U^*_k}|-|V_{U^*_i}|}{|V_{U^*_i}|\cdot \sum_{k \in C} |V_{U^*_k}|}\bigg)  Q_\sigma^i(S_i^*, U_i^*)
  \label{eq:ferror}
\end{equation}
\begin{lemma}[Theorem 1, from~\cite{donahue2021optimality}]
  An optimal partition $\Pi$ can be created for a set of local data owners $[N]$. First, start with every owner maximizing the local public anomaly $U_i^*$. Then, group the owners together in ascending order of node size, halting at the first owner would increase its error $(\sum_{k \in C} |V_{U^*_k}|-|V_{U^*_i}|)/(|V_{U^*_i}| \sum_{k \in C} |V_{U^*_k}|)  Q_\sigma^i(S_i^*, U_i^*)$ by joining the coalition $C$. The resulting partition $\Pi$ is optimal.
\label{lmthm}
\end{lemma}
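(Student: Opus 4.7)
The plan is to follow the standard three-step exchange-plus-greedy argument used for coalition-formation problems with size-weighted errors, adapted to the specific form of~(\ref{eq:ferror}). First, I would establish a \emph{contiguity} property: in any optimal partition $\Pi$, the owners within each coalition form a contiguous block in the ascending size ordering $|V_{U^*_{(1)}}| \le |V_{U^*_{(2)}}| \le \cdots \le |V_{U^*_{(N)}}|$. The proof is by an exchange argument: if three owners with sorted positions $a < b < c$ satisfy $a, c \in C$ while $b$ lies in a different coalition $C'$, swapping $b$ with the element of $C$ whose size is closest to $b$'s weakly reduces the total error. The reduction follows because the per-owner weight $(\sum_{k \in C} |V_{U^*_k}| - |V_{U^*_i}|)/(|V_{U^*_i}| \sum_{k \in C} |V_{U^*_k}|)$ is monotone in the sizes involved, while the multiplier $Q_\sigma^i(S_i^*, U_i^*)$ is non-negative by the non-negativity of $Q$.

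With contiguity established, the optimization reduces to choosing cut points along the sorted sequence. Second, I would prove that the greedy stopping rule is correct. Fix the smallest unassigned owner as the \emph{seed} $i$ of the current coalition $C$; its own error contribution $(\sum_{k \in C} |V_{U^*_k}| - |V_{U^*_i}|)/(|V_{U^*_i}| \sum_{k \in C} |V_{U^*_k}|) \cdot Q_\sigma^i(S_i^*, U_i^*)$ is non-decreasing as larger owners are appended to $C$, because the numerator grows while the denominator is only multiplied by the accumulating total. Consequently, once the marginal cost of admitting the next owner exceeds the benefit of closing $C$ and starting a fresh coalition with that owner as the new seed, every subsequent admission would be even worse. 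This monotonicity is what turns the global search over all cut-point choices into the local test in step (s2).

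Third, a short induction on $N$ closes the proof: after removing the greedily chosen first coalition $C_1$, the remaining owners form a strictly smaller sorted instance with exactly the same structure, so the inductive hypothesis yields an optimal partition of the remainder, which together with $C_1$ is globally optimal. The main obstacle, I expect, is verifying the monotonicity claim in the greedy step carefully: one must check that the seed owner is always the member of its coalition carrying the largest weighting factor (since $|V_{U^*_i}|$ is smallest among coalition members), so that the stopping condition on its error tightens the optimum for the whole coalition rather than only for the seed in isolation. Because Lemma~\ref{lmthm} is imported from~\cite{donahue2021optimality}, an equivalent route is to verify that~(\ref{eq:ferror}) fits the hypotheses of their Theorem~1 verbatim---a routine but necessary check that the weights $Q_\sigma^i(S_i^*, U_i^*)$ carry no implicit dependence on coalition membership, which is clear here because $U_i^*$ is fixed by the local optimization at step (c2) before any coalitions are formed.
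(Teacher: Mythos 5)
You should first know what you are being compared against: the paper's own ``proof'' of this lemma (in the appendix) is not a proof at all --- it simply restates the surrounding discussion about upper and lower bounds and then writes down the error function~(\ref{eq:ferror}), deferring the actual optimality claim wholesale to Theorem~1 of \cite{donahue2021optimality}. So your exchange-plus-greedy-plus-induction sketch is a genuinely different and more self-contained route, and it is the right \emph{shape} of argument for a size-sorted greedy partition result of this kind.

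However, the step you set aside as ``a routine but necessary check'' --- verifying that~(\ref{eq:ferror}) actually fits the hypotheses of the cited theorem --- is where the argument lives, and it does not go through. Simplify the per-owner term in~(\ref{eq:ferror}): the outer factor $|V_{U_i^*}|$ cancels against the $|V_{U_i^*}|$ in the denominator, leaving owner $i$'s contribution in coalition $C$ equal to $\bigl(1 - |V_{U_i^*}|/\sum_{k \in C}|V_{U_k^*}|\bigr)\, Q_\sigma^i(S_i^*, U_i^*)$, which is nonnegative and equals zero precisely when $C=\{i\}$. Hence the all-singleton partition has total error $0$ and is trivially optimal, and the halting rule in (s2) fires at the very first attempted join (any owner's error jumps from $0$ to a positive value). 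Your contiguity lemma and the monotonicity of the seed's error then hold vacuously, but the theorem they establish is the degenerate one. The reason the Donahue--Kleinberg result is non-trivial is that their per-player error contains a variance term that strictly \emph{decreases} with coalition size, so singletons have strictly positive error and there is a genuine trade-off against the bias-like term that~(\ref{eq:ferror}) mimics; with no analogous benefit-of-joining term here, the error function does not satisfy the hypotheses of their Theorem~1, and neither your greedy argument nor the citation yields a non-trivial optimal partition. To repair the proof you would need either a corrected error function (one in which $f$ is not minimized by singletons) or an explicit additional term rewarding coalition formation, and your exchange and monotonicity steps would then have to be re-verified against that corrected form.
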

By Lemma~\ref{lmthm}, we select a coalition $C \in \Pi$ with the minimum error, and achieve the desired public anomaly $U=\bigcup_{i \in C} U_{i}^*$. The public anomaly $U$ is the optimal solution for $\sum_{i\in [N]}Q_\sigma^i(S_i^*,\cdot)$.

Note that the bi-level optimization problem~(\ref{L}) has two targets. The local function $F_\alpha^i(S_i)$ aims to detect private anomalies using only the data of local data owner $i$. The alignment function $Q_\sigma^i(S_i^*, U_i^*)$ aims to federate these anomalies by aligning the local private anomaly on the public global anomaly. We can observe that the two functions exhibit at two scales, and introduce a hyperparameter $\lambda$, $\max_{U \subseteq G_0} \sum_{i \in [N]} \max_{S_i \subseteq G_i} F_{\alpha}^i(S_i) + \lambda Q_{\sigma}^i(S_i, U)$, controls the interpolation between the two functions. The hyperparameter $\lambda$ is related to the specific form of $Q$, e.g., $Q$ as a regularization term, and the theoretical properties of \texttt{\textsc{FadMan}} is proved under the general setting without $\lambda$. When $\lambda$ is set to 0, our algorithm performs as local anomaly detection tasks for each local data owner. Our algorithm focuses on more federated anomalies with increasing $\lambda$.

\textbf{(Relation to Ditto federated learning}~\cite{li2021ditto}\textbf{)}. We do not consider the network structure $\{G_i\}$, and each $G_i$ can be transformed to a vector $v_i$, e.g., $v_i(k) \leftarrow 1$ if the node $k$ is abnormal in $G_i$, and $v_i(k) \leftarrow 0$ otherwise. The vector $w$ is derived from $G_0$. We can take $Q$ as a regularization term, $\lambda/2 \cdot \parallel Av_i - w \parallel^2$. The optimization problem is to minimize $-F_i(v_i)+\lambda/2 \cdot \parallel Av_i - w^* \parallel^2$ for $v_i$ with each owner, subject to $w^* = \arg\min_{w} 1/N \cdot \sum_{i \in [N]} \parallel Av_i - w \parallel^2$.  Our  algorithm can be reduced to Ditto. Thus we propose a general algorithm framework for federated anomaly detection.

\begin{figure*}[t]
\setlength{\abovecaptionskip}{0.1cm}
\setlength{\belowcaptionskip}{-0.3cm}
    \centering 
    \includegraphics[width=0.95\linewidth]{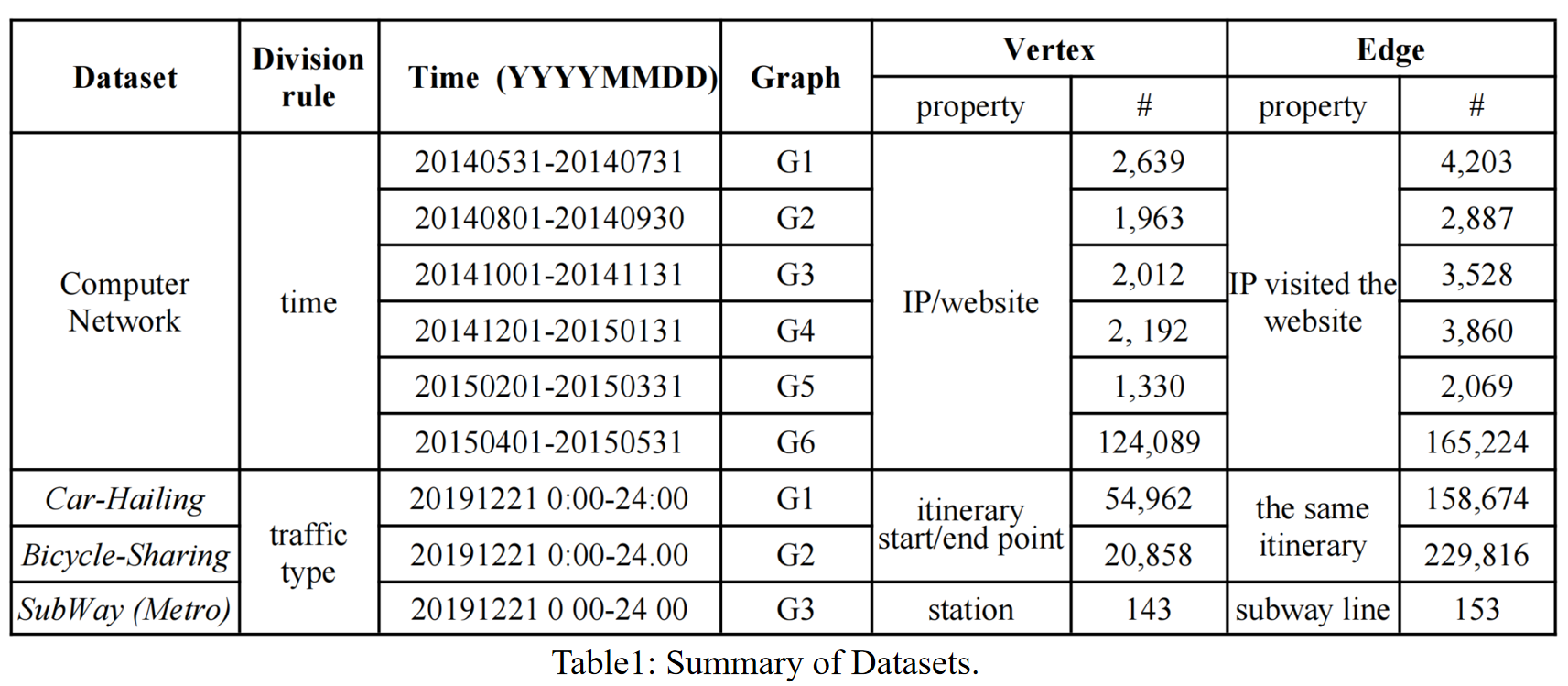}
    \label{fig:tableab}
    \vspace{-3mm}
\end{figure*}

\begin{figure}[t]
    \centering 
    \includegraphics[width=1\linewidth]{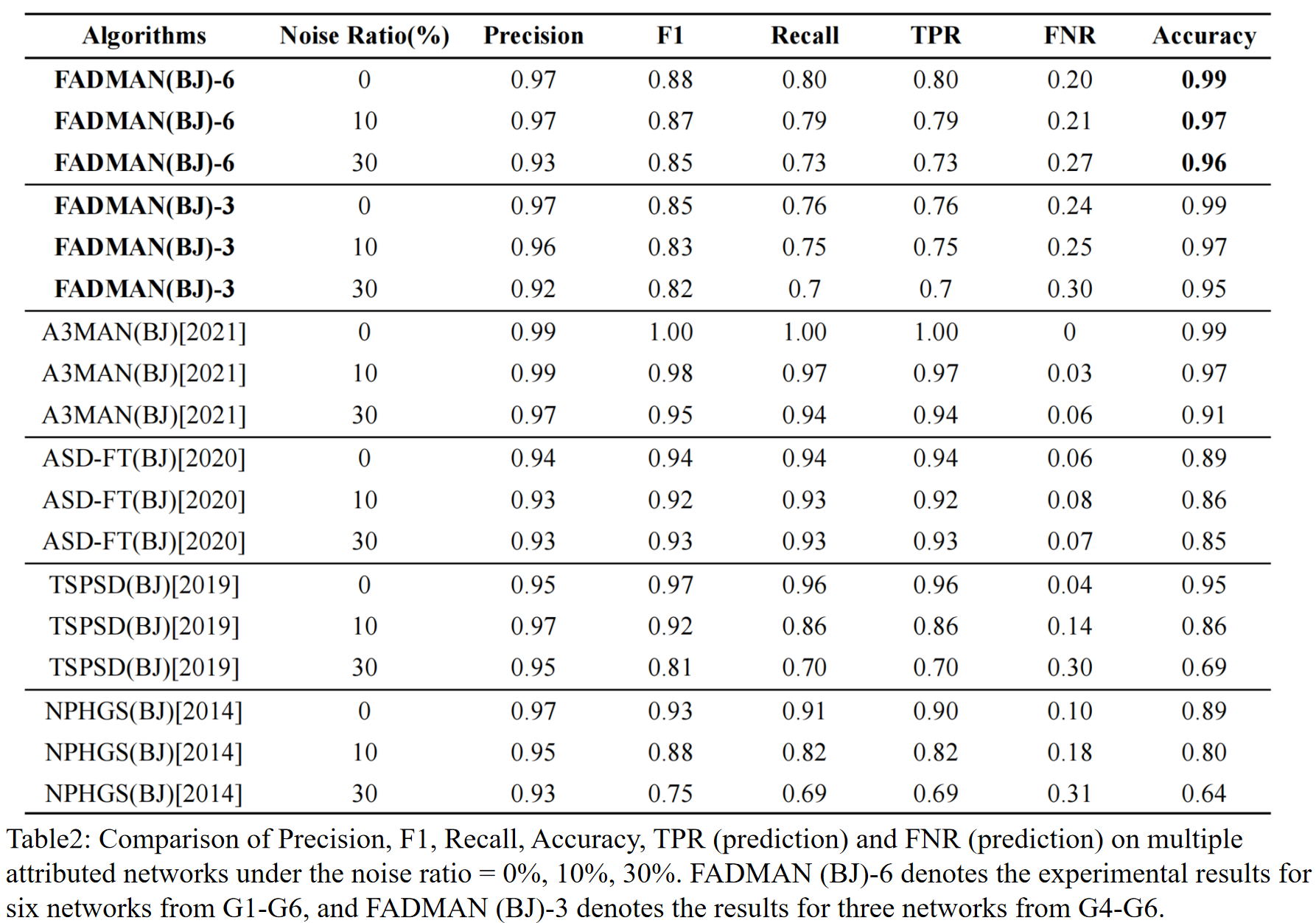}
    \label{fig:table2}
    \vspace{-0.8cm}
\end{figure}

\begin{figure}[t]
    \centering 
    \includegraphics[width=1\linewidth]{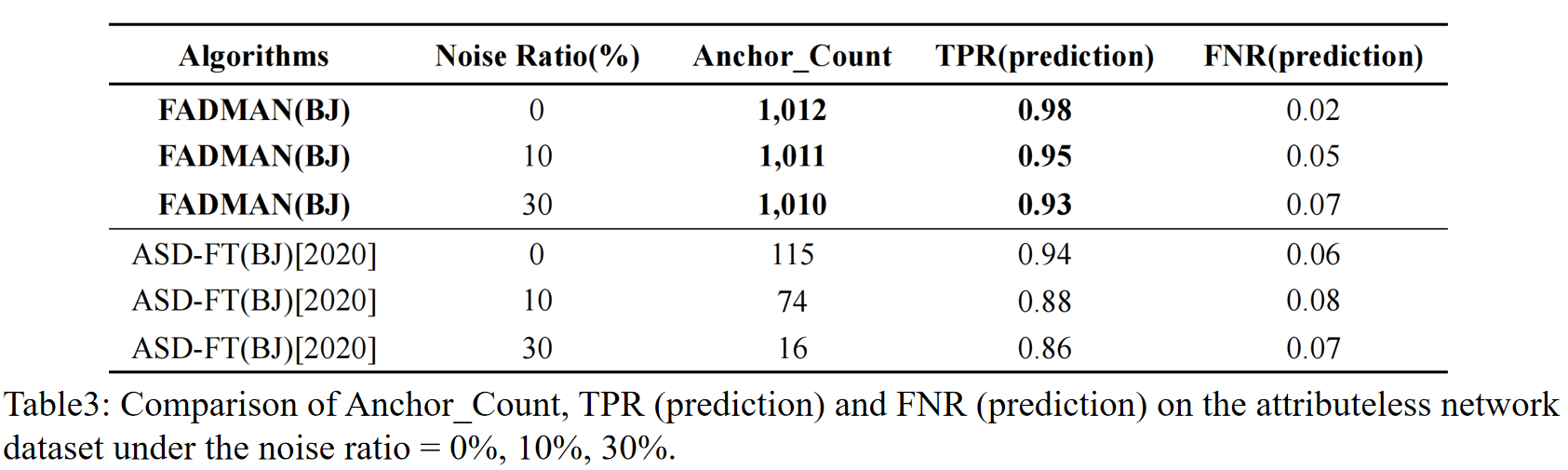}
    \label{fig:table3}
    \vspace{-0.5cm}
\end{figure}

By the theorem, our algorithm guarantees on detecting the most anomaly subgraphs on multiple private attributed networks. \texttt{\textsc{FadMan}}'s time complexity is \textit{$O(kN|V|^2)$}, where $k$ is the number of iterations, $N$ is the number of networks, $V$ is the number of nodes of the network. In practice, the implementations of $F_\alpha$ and $Q_\sigma$ are normalized because the function value ranges are different.

\begin{figure*}[t]
\setlength{\abovecaptionskip}{0.1cm}
\setlength{\belowcaptionskip}{-0.1cm}
    \centering 
    \includegraphics[width=14cm,height=8cm]{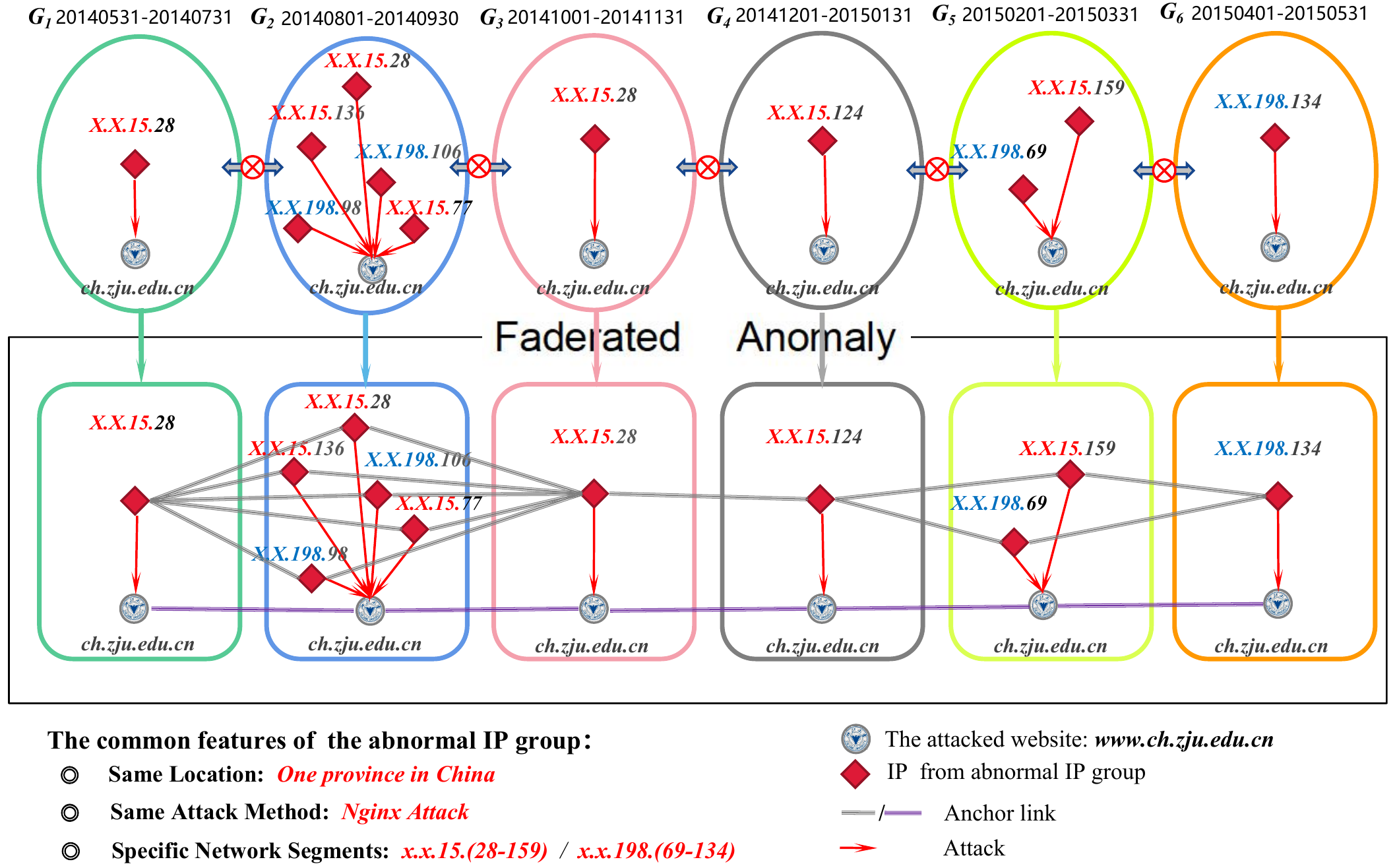}
    \caption{\textbf{A set of related abnormal IPs was detected by our method}. 
    The detected  IPs have potential correlations (e.g., at the same place). The site \textit {www.ch.zju.edu.cn} was attacked mainly  from two network segments \textit {x.x.15.(28-159)} and \textit {x.x.198.(69-134)}. The addresses of these IPs were all at the same place (i.e., \textit {Shanxi, China}). The attack methods were all \textit {Nginx Attack}.}
    \label{fig:case1}
\end{figure*}
\begin{figure*}[t]
\setlength{\abovecaptionskip}{0cm}
\setlength{\belowcaptionskip}{0.1cm}
    \centering 
    \includegraphics[width=0.95\textwidth]{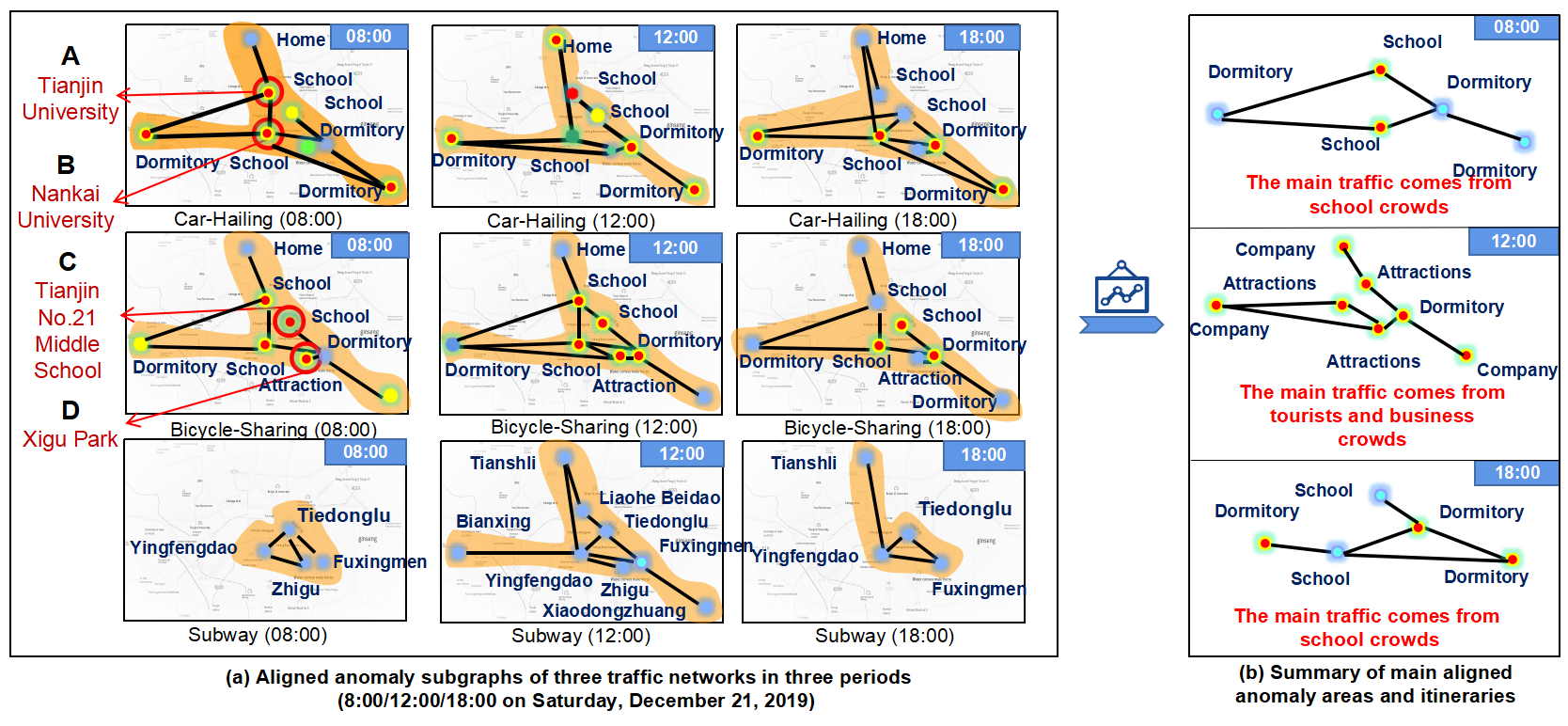}
    \caption{\textbf{Correlated abnormal distribution of Car-hailing $\&$ Bicycle-sharing $\&$ Subway detected by \texttt{\textsc{FadMan}}.} 
    We detected the aligned anomaly subgraphs at 8:00, 12:00 and 18:00 on December 21, 2019. (a) is a heat map of the aligned abnormal distribution of the traffic networks. Each hot spot represents an abnormal area, and the redder the color, the higher the abnormality of the area. The black lines indicate that there are itineraries between the two areas, the dark blue text indicates the main position type of the start and end points of the itineraries included in the area. (c) is a summary of the aligned abnormalities at three periods in (a). From the extracted main aligned anomaly areas and itineraries, we can observe that morning and evening traffic flow mainly comes from school crowds, while at noon, it is mainly business crowds and tourists.
   }    \label{fig:case2}
   \vspace{-6mm}
\end{figure*}
\section{EXPERIMENTS}
In this section, we performed a series of experiments to verify \texttt{\textsc{FadMan}}. We applied the algorithm to two real scenarios and verified the effectiveness of the algorithm on the two tasks of correlated anomaly detection on multiple attributed networks and anomaly detection on an attributeless network through ground truth. We compared it with four competitive baselines.

\textbf{Datasets.}
We constructed two scenarios (Table 1) based on five real datasets: Computer network, Car-hailing \& Bicycle-sharing \& Subway (Metro) traffic \& POI (Point of Interest) dataset. POI is the ground truth dataset to case study the federated anomaly association with interest locations. 

In the first scenario, we divided the computer network into 6 private networks (with anomalous properties) according to time, treated the original network as a public network (only topology information), and conducted two comparative experiments on it: correlated anomaly detection on multiple attributed networks and anomaly detection on the attributeless network (regard $G6$ as an attributeless network by setting the p-value of all nodes in $G6$ to 1). In the second scenario, we use Car-hailing itinerary network, Bicycle-sharing itinerary network, and Subway network as the 3 private attributed networks, regard POI dataset as the public network. In this scenario, we conducted an experiment: correlated anomaly detection on multiple attributed networks.

\textbf{Methods.} In this work, we use anomaly detection algorithms NPHGS \citep{chen2014non}, TSPSD\citep{wu2018nonparametric},A3MAN~\cite{zhang2018anomaly}, and anomaly alignment algorithm ASD-FT\cite{Ying2020anomaly} as baselines. 

\textbf{Metrics.}
We use Recall, Precision, F1 ,Accuracy, TPR (True Positive  Rate), FNR (False Negative Rate)
to evaluate algorithms' ability to detect the correlated anomaly on multiple attributed networks, and use
Anchor\_Count, TPR (prediction), FNR (prediction) to evaluate the algorithms' ability to detect anomalies on attributeless networks and discover related links among anomalies across the network (see supplementary in detail). 

\subsection{Experiment results}
We conduct comparative experiments on the computer network dataset with ground truth data, set $\alpha=0.15$, $\sigma=0.8$. 
The experimental results are shown in Table 2
and Table 3.

\textbf{1) Ability to detect correlated anomalies on multiple attributed networks:} 
Taking all the attributed computer networks as the input of \texttt{\textsc{FadMan}} and getting the results in Table 2. In this case, \texttt{\textsc{FadMan}} (BJ)-6 denotes the experimental results for six networks from G1-G6, and \texttt{\textsc{FadMan}} (BJ)-3 denotes the results for three networks from G4-G6. Except for the \texttt{\textsc{FadMan}} method, the other comparison methods are non-federated anomaly detection algorithms. From Table 2, we can draw the following conclusions:

Firstly, \texttt{\textsc{FadMan}} has reached 0.97 for Precision and 0.99 for Accuracy, which are basically the same as the non-federated methods, while protecting privacy. F1 and Recall both exceed the NPHGS method at a noise level of 30, and FNR is higher than the comparison method at all noise levels, thus showing that \texttt{\textsc{FadMan}} not only protects well privacy, while being able to have efficient anomaly detection.

Secondly, we conducted comparison experiments on three datasets and six datasets, respectively, and it can be obtained from Table 2 that each experimental metrics of the six datasets is higher than the experimental results of the three datasets, which is because considering more information is more beneficial to capture useful information and thus improve the performance of anomaly detection.


\textbf{2) Ability to detect anomalies on attributeless network:} 
By setting the p-values of all nodes in $G_6$ to $1$, it is regarded as an attributeless network. Then run \texttt{\textsc{FadMan}} to get metrics in Table 3 (Higher Criticism (HC) statistic results are the same as BJ). 
For comparable TPR (prediction) and FNR (prediction), the \texttt{\textsc{FadMan}} algorithm reached $0.98$ and $0.02$, which is significantly better than the baselines. Moreover, compared with the ASD-FT anomaly alignment algorithm, the total number of abnormal anchor links obtained by our algorithm is $1,012$, which is $8.8$ times than $115$ of ASD-FT, which proves the effectiveness of \texttt{\textsc{FadMan}}.


\subsection{Case study in computer network dataset}
Run \texttt{\textsc{FadMan}} on computer network dataset, input all networks, and set $\alpha=0.15$, $\sigma=0.8$. 

\textbf{1) Discovery of related abnormal IP group}: Our algorithm can obtain the abnormal IP group and mine the hidden attacking IP information (Figure \ref{fig:case1}). \texttt{\textsc{FadMan}} can mine potential abnormal anchor links among multi-layer networks. By summarizing the anchor nodes corresponding to these anchor links, we can obtain an abnormal IP group. Although these IPs appear in different periods, their attack behaviors are similar. Through their log information, we found these IPs come from several fixed network segments, and their attack methods and locations are also the same, which means that these IPs may come from the same attack source. Our method finds effective cyber attacks and thus improves the prevention of cyber attacks.

\textbf{2) Prediction of cyber attacks}: We treat $G_6$ as an attributeless network by setting the p-value of all nodes in $G_6$ to $1$ and use it with other networks as the input of \texttt{\textsc{FadMan}} to obtain its anomaly subgraph $S_6$. Regard $S_6$ as the prediction result, which summarizes the IPs that may attack the website during the period of $G_6$. We compare it with the real attacks that occurred during this period, and get the TPR (prediction) and FNR (prediction) in Table 3.
We can observe from the metrics that \texttt{\textsc{FadMan}} can make reasonably accurate predictions of future attacks. Our algorithm can detect the abnormal situation of the target network through networks with sufficient abnormal characteristics, even if the target network does not have any abnormal information.

\subsection{Case study in traffic datasets}

Run \texttt{\textsc{FadMan}} on multiple networks composed of the car-hailing itinerary network, the bicycle-sharing itinerary network, and the subway (metro) network. Match the abnormal detection results on the three traffic networks with the POI data to obtain the real information of the detection results, and then mapping it to a map. We selected three networks with the same period (8:00/12:00/18:00) and set $\alpha=0.05$, $\sigma=0.8$. We choose the illustrative analysis obtained on datasets of 8:00 as an example:


\textbf{Discovery of real events}: For the anomaly detection results of different datasets, we mapped them to the map after matching with POI data, and the obtained result graph is collectively referred to as the mapping graph below. From the mapping graph of a single dataset (Bicycle-Sharing), we can find that most of the anomalies are clustered near schools, hospitals and parks, and a small number of anomalies are also distributed around some large shopping malls. Schools, hospitals, and parks are popular places in a city in the morning. This distribution is in line with the actual situation, indicating that the obtained abnormal detection results are authentic. From the mapping graph of the two datasets (Car-Hailing \& Bicycle-Sharing), The number of anomalies distributed near Xigu Park is more than other parks. Based on this result, we searched the Weibo dataset and found that there were many posts related to Xigu Park that day, and the content of the posts were mostly records of visiting Xigu Park in the morning. Through further analysis of the detection results of the two datasets, we can find more information. Finally, from the mapping graph of the three datasets (Car-Hailing \& Bicycle-Sharing \& Subway), it can be found that there are many abnormal points near Tianjin University, Nankai University and Tianjin No. 21 Middle School, but there is no such situation near Tianjin No. 1 Middle School and Tianjin No. 2 Middle School. In view of this phenomenon, we consult relevant data and draw the following conclusions: In this experiment, the date of the dataset is December 21, 2019, which is the first day of the National Unified Entrance examination for Master's graduates. Therefore, the phenomenon of anomaly gathering near the school on this day is closely related to the Unified National Graduate Entrance Examination. Most of the candidates who took the exams at Tianjin University and Nankai University were students of their own schools, and their travel methods were mainly walking and riding shared bicycles. The candidates who took the exam at Tianjin No. 21 Middle School were all off-campus candidates, and their travel methods were mostly Car-Hailing, So there are a lot of anomalies around these three sites. In contrast, Tianjin No. 1 Middle School and Tianjin No. 2 Middle School are not test sites, so there are no abnormal points around them. According to the above analysis, we can find that different numbers of datasets can obtain different interpretable results. The more datasets we have, the more anomalies we can get, which is exactly what \texttt{\textsc{FadMan}} aims to achieve.

\section{Conclusion}
In this paper, we study the problem of federated anomaly detection across multiple attributed networks and propose a federated learning solution, \texttt{\textsc{FadMan}}. Our algorithm first introduces the network alignment method to the anomaly subgraph detection across  multiple attributed networks, and protects private industry data. Our algorithm can be applied to detect anomalies on attributeless networks. A solid theoretical basis is developed for our algorithm.


\appendix

\bibliographystyle{plainnat}
\bibliography{references.bib}


\appendix

\section{Appendix}

\subsection{Related work}
Our work is related to anomaly detection and network alignment. Here, we briefly introduce the related work in these two aspects.

\textbf{Anomaly detection} 
Anomaly detection has always been the focus of attention. Point anomalies only assign outliers to nodes, which can be regarded as a binary 0/1 classification problem \citep{akoglu2015graph,wang2018anomaly,eswaran2018spotlight}, and there is no connection between the detected abnormal nodes. However, with the expansion of anomaly detection in the field of graphics and the needs of actual scenes, abnormal nodes usually need to be displayed as connected subgraphs. On the other hand, the discovery of anomalies is often inseparable from statistical data. Compared with traditional parameterized scanning statistics (Kulldorff statistical data \citep{kulldorff1997spatial}), nonparametric graph scan statistics (NPGS) can be applied to heterogeneous graph data because it is free of distribution assumption. Therefore, many NPGS-based abnormal connected subgraph detection algorithms were born in combination with actual scenarios and performance requirements. They can be divided into exact algorithms \citep{takahashi2008flexibly,speakman2015scalable,le2019probabilistic}  and approximate algorithms
\citep{chen2014non,wu2018nonparametric,speakman2013dynamic}. 
Among them, Wu.etal proved that anomaly subgraph detection is an NP-hard problem, and proposed TSPSD \citep{wu2018nonparametric} based on dynamic programming\citep{cadena2018graph,bhatia2020midas,cai2020structural,boniol2020series2graph}. The algorithm approximates the graph to a tree topology and can be used for the large-scale dataset. Most of these algorithms are used in a single network, and only a few are used in multi-layer network scenarios  \citep{bindu2017discovering,de2015ranking,tam2019anomaly,jie2020framework}, and they are all used to identify abnormal nodes, not subgraphs. 
To the best of our knowledge, the latest work used to detect multiple networks' anomaly subgraphs is ASD-FT \citep{Ying2020anomaly}. It is a method of detecting anomaly subgraphs of the graph based on anomaly features of another graph. It introduces the basic idea of network alignment to capture anomaly features' transmission by inferring the basic edges between multiple entity networks. However, it is suitable for two network scenarios, which is different from our work.

\textbf{Network alignment}
Network alignment is the basic problem of cross-network mining, and many papers have proposed solutions. Most of them are based on attributes and structure. Traditional methods  \citep{perito2011unique,vosecky2009user,nassar2018low,mathieson2019using} mostly use entity tag information to achieve alignment, such as user nicknames in social networks and entity names in knowledge graphs. Manually defining features is another method \citep{almishari2012exploring,ni2018network}. This method needs to carefully design features manually for specific problems, and it is not easy to migrate to other scenarios. Most of the above two types of methods only consider attribute information, while some methods consider both network structure and attribute information (COSNET \citep{zhang2015cosnet}, 
HYDRA \citep{liu2014hydra}, 
REGAL \citep{heimann2018regal}\citep{ye2019vectorized,huynh2019network,yan2021bright}). They want to complement the network structure and attribute information to achieve a better alignment effect. Moreover, there are many network alignment studies based on deep learning\citep{sun2019dna,wang2018deepmatching,meng2019deep}. In addition, because the attribute information may be falsely fabricated or lost or hidden due to privacy, there are many alignment  algorithms based only on structural information
 (BigAlign \citep{koutra2013big}, UMA \citep{zhang2015multiple}, IONE \citep{liu2016aligning}, CrossMNA \citep{chu2019cross}). Among them, UMA, REGAL, and CrossMNA can be applied to multiple network scenarios. Nevertheless, UMA and REGAL follow the assumption of topological consistency and cannot handle networks with different structures. However, CrossMNA does not follow topological consistency and can learn a common structure across network diversity. By integrating information from different networks, enhances the effect of embedding and effectively reduces space overhead, so it is suitable for large-scale multi-network scenarios.

Our work is based on the TSPSD and CrossMNA algorithms. Compared with the existing work, we are innovative and superior to the baselines in terms of efficiency and comprehensiveness.

\begin{figure}[t]
    \centering 
    \includegraphics[width=1\linewidth]{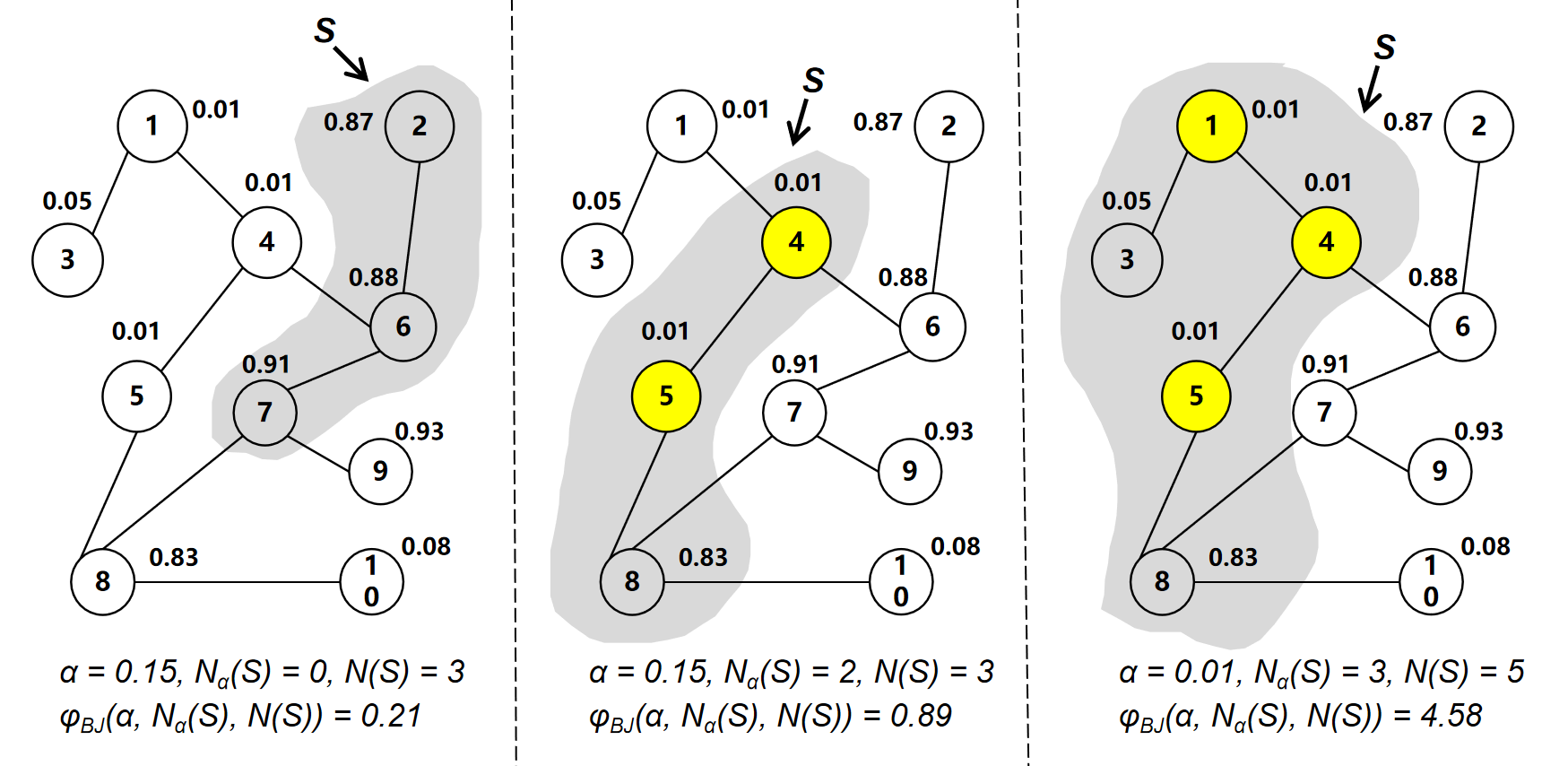}
    \caption{The BJ statistical scores for the three example subgraphs show that the score function increases with $N_{\alpha}(S)$ and decreases with $N(S)-N_{\alpha}(S)$ and $\alpha$. Yellow points are the vertices with p-values less than or equal to $\alpha$.
}
\label{fig:BJ}
\vspace{-0.3cm}
\end{figure}

\subsection{Two non-parametric graph scan statistics}
A data owner has an attributed network $G = (V, E, P)$, $V=\{v_1,\cdots,v_n\}$,  $E\subseteq V\times V$ are sets of vertices and edges in $G$, and $P \in \mathbb{R}^n$ is the specific anomaly feature set of $G$. $P$ is widely obtained by the mapping function $\mathbf{p}: V \rightarrow[0, 1]$ defines the empirical p-value corresponding to each node $v \in V$ \citep{wu2018nonparametric,chen2014non}, the smaller the p-value, the more abnormal the node.
Anomaly target detection can be represented as identifying subgraphs $S\subseteq G$  whose vertex set $V_{S}$ and edge set $E_{S}$ are subset of $V$, $E$.  

\textbf{Intersection of subgraphs} Subgraph $S_1\subseteq G$  whose vertex set $V_{S_1}$ and edge set $E_{S_1}$ are subset of $V$, $E$.
Subgraph $S_2\subseteq G$  whose vertex set $V_{S_2}$ and edge set $E_{S_2}$ are subset of $V$, $E$. $S$ is the intersection set of $S_1$ and $S_2$, where $V_{S}=V_{S_1} \cap V_{S_2}$, $E_{S}=E_{S_1} \cap E_{S_2}$.

\textbf{Union of subgraphs} Subgraph $S_1\subseteq G$  whose vertex set $V_{S_1}$ and edge set $E_{S_1}$ are subset of $V$, $E$.
Subgraph $S_2\subseteq G$  whose vertex set $V_{S_2}$ and edge set $E_{S_2}$ are subset of $V$, $E$. $S$ is the union set of $S_1$ and $S_2$, where $V_{S}=V_{S_1} \cup V_{S_2}$, $E_{S}=E_{S_1} \cup E_{S_2}$.

\textbf{Difference of subgraphs} Subgraph $S_1\subseteq G$  whose vertex set $V_{S_1}$ and edge set $E_{S_1}$ are subset of $V$, $E$.
Subgraph $S_2\subseteq G$  whose vertex set $V_{S_2}$ and edge set $E_{S_2}$ are subset of $V$, $E$. $S$ is the difference set of $S_1$ and $S_2$, where $V_{S}=V_{S_1} - V_{S_2}$, $E_{S}=E_{S_1} - E_{S_2}$.

\textbf{empirical p-value} The empirical p-value of node v is then defined as:
\begin{equation}
p(v)=\frac{1}{T}\sum\limits_{t=1}^{T}I(f(v^{(t)})\geq f(v)).
\end{equation}
where $f(v^{(t)})$ refers to the feature vector of the node $v$ at time $t$. The empirical value $p(v)$ defined above can
be interpreted as the proportion of historical observations $f(v^{(t)})$ when there was no event occurring with observed
values that are greater than or equal to the current observation $f(v)$.

\textbf{BJ statistics} 
    Berk-Jones (BJ) statistic is defined as follows~\citep{bj}:
    \begin{equation}\label{bj}
        \varphi _{BJ}(\alpha,N_\alpha(S),N(S))=N(S)\times KL(\frac{N_{\alpha}(S)}{N(S)},\alpha),
    \end{equation}
    where KL is Kullback-Liebler divergence between the observed and expected proportions of p-values less than $\alpha$, its formulation is presented as follows:
    \begin{equation}
        KL(a,b)=\left\{\begin{matrix}
    a\log(\frac{a}{b})+(1-a)\log(\frac{1-a}{1-b}) , & if \quad a \geq b \\ 
     0 , & if\quad  a < b
    \end{matrix}\right.
    \end{equation}
    The BJ statistic can be interpreted as the log-likelihood ratio statistic for testing whether the empirical p-value follows a uniform or piecewise constant distribution. Berk and Jones demonstrated that this statistic fulfills several optimality properties and has greater power than any weighted Kolmogorov statistic. We illustrate the BJ statistic in Figure\ref{fig:BJ}.
    	
\textbf{HC statistics} 
    Higher Criticism (HC) statistic is defined as follows~\citep{hc}:
    \begin{equation}\label{hc}
        \varphi_{HC}(\alpha,N_{\alpha}(S),N(S))=\frac{N_{\alpha}(S)-N(S)\alpha}{\sqrt{N(S)\alpha(1-\alpha)}}.
    \end{equation}
	The HC statistic can be interpreted as the log-likelihood ratio statistic for testing whether the empirical p-values follow a uniform or binomial distribution with $N(S)$ and $\alpha$.

\textbf{Anomalies in our experiments} In the traffic datasets, anomalous nodes refer to nodes with high number of trips, the more the number of trips, the more anomalous the node is. Through experiments on traffic datasets we can conclude that the more datasets we have, the more anomaly information we get. For example, anomaly detection on one dataset can only detect the node itself as a location with high traffic such as shopping malls, hospitals, etc. On December 21, 2019, for the location hospital, anomaly detection on a single dataset can detect it as anomaly. Anomaly detection on the shared bicycle dataset only cannot detect the anomaly node of Tianjin No. 21 High School, but by performing anomaly detection on three datasets, the location can be found to be anomalous because it is a weekend and school is not in session, students living nearby usually choose shared bicycles as transportation to school instead of online cars, since this day is the first day of the National Unified Entrance examination for Master's graduates, candidates usually choose convention cars as their transportation, so this abnormal node of Tianjin No. 21 High School is detected.

\subsection{Theoretical analysis}

We now analyze the increasing property in the objective when one iteration of the proposed algorithm is performed. The convergence of \texttt{\textsc{FadMan}} is proved based on several lemmas and the four properties of objective function, without the specific forms of $F$ and $Q$ functions. Each lemma is a building-block that describes that there exists an local optimal solution.
\begin{theorem}[Convergence and Optimality of \texttt{\textsc{FadMan}}]
\label{theorem:main1}
Within the pre-aligned domain (each data owner integrates the alignment probability information of all node pairs between its private network and the public network), given the parameter $\alpha$ and $\sigma$ settings,  Algorithm 1 converges to the optimal solution of the problem. 
\end{theorem}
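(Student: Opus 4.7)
The plan is to view Algorithm 1 as a block-coordinate (alternating) maximization on the bi-level objective of~(\ref{L}), separating the inner maximization over $\{S_i\}$ from the outer maximization over $U$. I would establish two things: (i) the value of the objective of~(\ref{L}) is monotonically non-decreasing across iterations, and (ii) the sequence of iterates lives in a finite set, since each $G_i$ and $G_0$ has finitely many connected subgraphs. Together these force the sequence to stabilize after finitely many steps, and I would then show that any such fixed point satisfies the first-order optimality conditions of~(\ref{L}).

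\textbf{Monotonicity half of the proof.} For the participant step (c1), with $U^{(t)}$ fixed the local solver returns $S_i^{(t+1)} = \arg\max_{S \subseteq G_i}[F_\alpha^i(S) + Q_\sigma^i(S, U^{(t)})]$ exactly (invoking the optimal solvers of \citep{wu2018nonparametric,Ying2020anomaly} cited in the paper), so summing over $i$ cannot decrease the value of~(\ref{L}) evaluated at $U^{(t)}$. For the server step (s1)--(s3), I would argue that $U^{(t+1)} = \bigcup_{i \in C} U_{(i)}^*$ satisfies $\sum_i Q_\sigma^i(S_i^{(t+1)}, U^{(t+1)}) \ge \sum_i Q_\sigma^i(S_i^{(t+1)}, U^{(t)})$. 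Here Lemmas~\ref{lmup}--\ref{lmthm} enter in sequence: each participant's (c2) step yields the local maximizer $U_i^*$ producing the upper bound of Lemma~\ref{lmup}; any candidate drawn from $\bigcup_i U_i^*$ yields the lower bound of Lemma~\ref{lmlow}; Lemma~\ref{lmopt} then sandwiches the true outer maximizer between them, and Lemma~\ref{lmthm} guarantees that the greedy ascending-size coalition rule realizes the sandwich. Because $F_\alpha^i(S_i^{(t+1)})$ is independent of $U$, these two half-steps chain to give a monotone non-decreasing sequence of objective values.

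\textbf{Convergence and optimality.} Since the joint domain of $(\{S_i\}, U)$ is finite, the monotone non-decreasing sequence must stabilize; let $(\{S_i^\infty\}, U^\infty)$ denote the fixed point. At this fixed point (c1) gives $S_i^\infty = \arg\max_{S \subseteq G_i}[F_\alpha^i(S) + Q_\sigma^i(S, U^\infty)]$ for every $i$, and by Lemma~\ref{lmopt} the server output $U^\infty$ maximizes $\sum_i Q_\sigma^i(S_i^\infty, U)$ over $U \subseteq G_0$. Plugging these two first-order conditions back into~(\ref{L}) and using that the $F$-terms do not depend on $U$, we recover exactly the outer-then-inner form of~(\ref{L}), so $(\{S_i^\infty\}, U^\infty) = (S^*, U^*)$.

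\textbf{Main obstacle.} The hard part will be the server-side step. Lemmas~\ref{lmup}--\ref{lmopt} only bracket the true maximizer between two bounds; they do not by themselves hand us a constructive $U^{(t+1)}$ that closes the gap. The argument must lean on Lemma~\ref{lmthm} from~\cite{donahue2021optimality} to show that the greedy ascending-size coalition merge exactly minimizes the error~(\ref{eq:ferror}) induced by properties (P3)--(P4), so that the sandwich collapses to equality at the chosen coalition. Verifying that (P3) and (P4) really do cast the alignment error in the shape required by that theorem, and that the pre-alignment hypothesis (each owner already holds all cross-network pair probabilities) rules out information leakage affecting the coalition score, is where I expect the bulk of the technical work to sit.
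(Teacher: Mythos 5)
Your proposal follows essentially the same route as the paper: an alternating (block-coordinate) ascent argument in which the participant step (c1)--(c2) and the server-side coalition step (via Lemmas~\ref{lmup}--\ref{lmthm}) each monotonically increase the objective of~(\ref{L}), with termination at a fixed point where both blocks are simultaneously optimal. Your explicit appeal to the finiteness of the subgraph domain to force stabilization is a slight tightening of the paper's ``repeat until the public anomaly is unchanged'' argument, but otherwise the decomposition, the role of each lemma, and the final optimality claim coincide with the paper's proof.
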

\begin{proof}
At the $t$-th iteration, we have the public anomaly $U^t$. By the step 5 at Algorithm 1, we can obtain the local optimum solution $\{S_i^t\}$.
\begin{displaymath}
\sum_{i \in [N]} F_\alpha^i(S_i^t) + Q_\sigma^i(S_i^t, U^t)
\end{displaymath}
By the step 6 at Algorithm 1, we can obtain the new local optimum public anomaly $\{U_i^*\}$. We must upload $U_i^*$ to the server if $Q_\sigma^i(S_i^t, U^*_i) > Q_\sigma^i(S_i^t, U^t)$, otherwise upload $U^t$ to the server for the $i$-th owner. By the step 9 at Algorithm 1 and the lemma~\ref{lmthm1}, we can obtain an optimum public anomaly $U^*$ for the set $\{S_i^t\}$. Thus we have $\sum_{i \in [N]}Q_\sigma^i(S_i^t, U^*) > \sum_{i \in [N]}Q_\sigma^i(S_i^t, U^t)$. We consider $U^*$ as the new public anomaly $U^{t+1}$, and we must have the following inequality:
\begin{displaymath}
\sum_{i \in [N]} F_\alpha^i(S_i^t) + Q_\sigma^i(S_i^t, U^{t+1}) > \sum_{i \in [N]} F_\alpha^i(S_i^t) + Q_\sigma^i(S_i^t, U^t)
\end{displaymath}
As $S_i^{t+1}$ is the local maximum solution at the step 5 at Algorithm 1, we can obtain the new inequailty $\sum_{i \in [N]} F_\alpha^i(S_i^{t+1}) + Q_\sigma^i(S_i^{t+1}, U^{t+1}) > \sum_{i \in [N]} F_\alpha^i(S_i^t) + Q_\sigma^i(S_i^t, U^t)$. Algorithm 1 is repeated until the public anomaly is not changed. We obtain the optimum solution. We proved the theorem.
\end{proof}
Though Theorem~\ref{theorem:main1} looks straightforward to a network-based algorithm framework for federated anomaly detection, the problems at steps (c1) and (c2) can be addressed with the previous methods~\cite{Ying2020anomaly,wu2018nonparametric} support the optimal solutions. At the step (c1), given the public anomaly $U$, the optimal private anomaly $S_i^*$ is obtained. We are target for the new improved public anomaly $U$ at the next iteration. 

At the $t$-th iteration, for the server side, we can observe the public anomalies $\{U_i^*\}$ and the function values of $F$ and $Q$. We assume that there exists a public anomaly subgraph $\underline{U^* \subseteq G_0}$ maximizing $\sum_{i \in [N]} Q_\sigma^i(S_i^*, U^*)$ under the fixed $S_i^*$. The value of $F_\alpha^i(S_i^*)$ will not change with the update $U$.

\begin{lemma}[$Q$ upper bound]
We must have the upper bound of the optimal $Q$ under the fixed $S_i^*$.
\begin{equation}
\sum_{i=1}^N   Q_\sigma^i(S_i^*, U_i^*)
 \geq 
 \sum_{i=1}^N  Q_\sigma^i(S_i^*, U^*).
\end{equation}
\label{lmup1}
\end{lemma}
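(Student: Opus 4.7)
The plan is to prove this inequality as an immediate consequence of the defining property of the local optima $U_i^*$, without invoking any structural properties of $Q$ beyond the argmax definition. Recall from step (c2) of Algorithm 1 that
\begin{equation}
U_i^* \;=\; \arg\max_{U \subseteq G_0}\; Q_\sigma^i(S_i^*, U),
\end{equation}
so $U_i^*$ attains the pointwise maximum of the $i$-th alignment score over the entire feasible family $\{U : U \subseteq G_0\}$. The putative global maximizer $U^*$ lies in exactly this family, since by hypothesis $U^* \subseteq G_0$. Hence $U^*$ is simply one admissible test point for each of the $N$ local maximization problems.

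The execution is then a single inequality applied $N$ times and summed. First, I would write, for each fixed $i \in [N]$, the pointwise bound
\begin{equation}
Q_\sigma^i(S_i^*, U_i^*) \;\geq\; Q_\sigma^i(S_i^*, U^*),
\end{equation}
which holds because the left-hand side is the supremum of $Q_\sigma^i(S_i^*, \cdot)$ over all $U \subseteq G_0$ while the right-hand side evaluates the same function at a particular feasible choice. Summing these $N$ inequalities over $i = 1, \dots, N$ preserves the direction of the inequality and yields exactly the claimed bound $\sum_{i=1}^N Q_\sigma^i(S_i^*, U_i^*) \geq \sum_{i=1}^N Q_\sigma^i(S_i^*, U^*)$. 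This is the classical max-sum inequality $\sum_i \max_x f_i(x) \geq \max_x \sum_i f_i(x)$ specialized to the family $f_i(U) = Q_\sigma^i(S_i^*, U)$ evaluated at the single point $U^*$.

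There is essentially no main obstacle: the lemma is a definitional corollary, not a structural result, so none of properties P1--P4 are needed and no argument about connectedness, alignment matrices, or the specific form of $Q_\sigma^i$ must be invoked. The only sanity check worth including in the write-up is that the feasible sets coincide, namely that both $U_i^*$ and $U^*$ range over subgraphs of the \emph{same} public network $G_0$, so that $U^*$ is indeed a legitimate competitor in each local argmax problem; this is immediate from the setup. The lemma is best viewed as the easy half of a sandwich, paired with Lemma~\ref{lmlow}'s lower bound, that together constrain the location of the true optimum in Lemma~\ref{lmopt}.
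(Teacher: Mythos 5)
Your proof is correct and follows essentially the same route as the paper's: both use the fact that $U_i^*$ is the argmax of $Q_\sigma^i(S_i^*,\cdot)$ over all $U \subseteq G_0$ (step (c2)/step 6 of Algorithm 1), note that $U^*$ is a feasible competitor in each of these $N$ problems, and sum the resulting pointwise inequalities. Your write-up is somewhat more explicit about the feasible sets coinciding, but the argument is the same.
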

\begin{proof}
At the step 6 in Algorithm 1, we can observe that $U_i^*$ maximizes the $Q_\sigma^i(S_i^*, U_i^*)$ with the fixed $S_i^*$. Thus for the i-th data owner, given an public anomaly $U^*$, we must have
\begin{displaymath}
Q_\sigma^i(S_i^*, U_i^*) \geq Q_\sigma^i(S_i^*, U^*)
\end{displaymath}
For all of the local data owners, the inequality $\sum_{i=1}^N   Q_\sigma^i(S_i^*, U_i^*)
 \geq 
 \sum_{i=1}^N  Q_\sigma^i(S_i^*, U^*)$ must satisfied. We proved this lemma for $Q$ upper bound.
\end{proof}
\begin{lemma}[$Q$ lower bound]
A public anomaly $\underline{U^o \subseteq \bigcup_{i \in [N]} U_i^*}$ maximizes $\sum_{i \in [N]} Q_\sigma^i(S_i^*, U^o)$. We must have the lower bound of the optimal $Q$ under the fixed $S_i^*$.
\begin{equation}
 \sum_{i=1}^N   Q_\sigma^i(S_i^*, U^*)
 \geq 
 \sum_{i=1}^N   Q_\sigma^i(S_i^*, U^o).
 \label{eq:lower1}
\end{equation}
\label{lmlow1}
\end{lemma}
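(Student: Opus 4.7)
The plan is to exploit the fact that this lemma is, at its core, a statement about nesting of feasible sets in an optimization problem. The quantity $U^*$ is defined as a maximizer of $\sum_i Q_\sigma^i(S_i^*,\cdot)$ over the full feasible set of subgraphs of $G_0$, while $U^o$ is defined as a maximizer of the same functional over the restricted feasible set of subgraphs of $\bigcup_{i\in[N]} U_i^*$. So the whole argument reduces to showing that the restricted feasible set is contained in the unrestricted one, and then invoking monotonicity of the maximum over containment of domains.

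First I would record that each local maximizer $U_i^*$ produced at step (c2) is, by construction, a subgraph of the public network $G_0$. This immediately gives $\bigcup_{i\in[N]} U_i^* \subseteq G_0$, with both vertex and edge sets inherited as subsets. Second I would use this to conclude that any subgraph $U \subseteq \bigcup_{i\in[N]} U_i^*$ is also a subgraph of $G_0$, so in particular $U^o$ is a feasible candidate in the optimization problem that defines $U^*$. Third, by the definition of $U^*$ as an (unrestricted) maximizer over subgraphs of $G_0$, it follows that
\begin{equation*}
\sum_{i=1}^N Q_\sigma^i(S_i^*, U^*)
\;=\; \max_{U \subseteq G_0} \sum_{i=1}^N Q_\sigma^i(S_i^*, U)
\;\geq\; \sum_{i=1}^N Q_\sigma^i(S_i^*, U^o),
\end{equation*}
which is exactly the bound claimed.

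Honestly, I do not expect a hard obstacle here; the lemma is essentially a ``max over a bigger set dominates max over a smaller set'' observation, together with the structural fact that the $U_i^*$ live in $G_0$. The only subtlety worth flagging is making sure the notion of ``subgraph'' used inside $Q_\sigma^i$ is consistent across the two feasibility classes --- i.e., that candidates counted as admissible subgraphs of $\bigcup_i U_i^*$ (connectivity, vertex/edge set membership, and the alignment-count $N_\sigma(S_i^*, \cdot)$) remain admissible when viewed as subgraphs of $G_0$. Since $G_0$ contains all nodes and edges of each $U_i^*$, and the alignment probabilities used by $Q_\sigma^i$ are defined on node pairs in $V_{S_i^*}\times V_{G_0}$ (so restricting to $V_U \subseteq V_{G_0}$ changes nothing), this verification is immediate. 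Once that routine check is in hand, the displayed inequality (\ref{eq:lower}) follows in a single line.
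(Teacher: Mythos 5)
Your proposal is correct and matches the paper's own argument: the paper likewise derives the inequality directly from the fact that $U^*$ is defined as the maximizer over all of $G_0$ while $U^o$ ranges only over subgraphs of $\bigcup_{i\in[N]} U_i^* \subseteq G_0$, so the larger feasible set dominates. Your write-up is in fact more explicit than the paper's (which dismisses the step as ``proved with the intuition ways''), but it is the same nesting-of-feasible-sets observation.
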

\begin{proof}
Under the fixed $S_i^*$, we have already assumed that the public anomaly subgraph $\underline{U^* \subseteq G_0}$ maximizing $\sum_{i \in [N]} Q_\sigma^i(S_i^*, U^*)$.
Thus for a public anomaly subgraph $\underline{U^o \subseteq \bigcup_{i \in [N]} U_i^*}$, we must have the inequality~(\ref{eq:lower1}). This proof is proved with the intuition ways.
\end{proof}
\begin{lemma}[$Q$ optimum solution]
The solution $U^o = U^*$ maximizes $Q$ under the fixed $S_i^*$ over $G_0$. The optimum solution must exist between
\begin{equation}
    \min_{U \subseteq \bigcup_{i \in [N]} U_i^*} \sum_{i=1}^N   \Big[Q_\sigma^i(S_i^*, U_i^*) -   Q_\sigma^i(S_i^*, U)\Big]
    \label{eqopt1}
\end{equation}
\label{lmopt1}
\end{lemma}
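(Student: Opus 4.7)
The plan is to sandwich the true maximum $Q^\star := \sum_i Q_\sigma^i(S_i^*, U^*)$ between the upper bound from Lemma \ref{lmup} and the family of lower bounds from Lemma \ref{lmlow}, and then argue that the minimum of the displayed gap is achieved precisely at a subgraph equal to $U^*$. Concretely, subtracting the lower bound from the upper bound yields, for every candidate $U \subseteq \bigcup_{i \in [N]} U_i^*$, a nonnegative quantity
\[
\sum_{i=1}^N \bigl[Q_\sigma^i(S_i^*, U_i^*) - Q_\sigma^i(S_i^*, U)\bigr]
\]
that majorizes the true gap $\sum_i [Q_\sigma^i(S_i^*, U_i^*) - Q_\sigma^i(S_i^*, U^*)]$. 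Minimizing this majorant over the union domain gives the tightest such bound, and by a standard squeeze argument the minimizer must coincide with $U^*$ whenever the sandwich is tight.

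Next I would establish the containment $U^* \subseteq \bigcup_{i \in [N]} U_i^*$, so that restricting attention to the union does not discard the true optimum. The argument leverages properties P3 and P4 together with the $\sigma$-threshold structure of $Q$: any node of $G_0$ lying outside every local maximizer $U_i^*$ is either not $\sigma$-aligned with any vertex in $V_{S_i^*}$ (hence contributes nothing to $N_\sigma(S_i^*, U)$ by P3) or was already rejected when $U_i^*$ was formed. In either case, by P4 its inclusion inflates $N(U)$ and thus strictly decreases the aggregate $\sum_i Q_\sigma^i(S_i^*, \cdot)$, so no maximizer can contain it.

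Combining the two pieces, the maximizer $U^o$ of $\sum_i Q_\sigma^i(S_i^*, \cdot)$ restricted to $\bigcup_i U_i^*$ must equal the unrestricted maximizer $U^*$, and the minimum of (\ref{eqopt}) is exactly $\sum_i [Q_\sigma^i(S_i^*, U_i^*) - Q_\sigma^i(S_i^*, U^*)]$. This identifies the displayed expression as localizing the optimum between the upper and lower bounds, and yields $U^o = U^*$ as claimed.

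The main obstacle I anticipate is the containment step $U^* \subseteq \bigcup_i U_i^*$. Each $U_i^*$ is an independent local maximizer of a different summand, and one has to rule out the possibility that a node outside every $U_i^*$ could help the \emph{sum} even while hurting each individual term. The additive aggregation combined with P3 and P4 ought to close this loophole, but it is the step where the general four-property abstraction is most strained, and a fully rigorous argument would need to exploit the $\sigma$-threshold indicator form of $N_\sigma$ to show that nodes outside the union make a strictly nonpositive marginal contribution to every summand simultaneously.
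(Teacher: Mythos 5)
Your proposal follows essentially the same route as the paper's proof: sandwich the optimum between the $Q$ upper- and lower-bound lemmas, then show the unrestricted maximizer contains no vertex outside $\bigcup_{i\in[N]} U_i^*$ (the paper writes $U = U^o \cup U^-$ and eliminates $U^-$ by two contradictions --- P4 for vertices not aligned to any $S_i^*$, and P3 combined with the local maximality of $U_i^*$ for aligned ones --- which is your ``contributes nothing'' / ``already rejected'' dichotomy). The obstacle you flag is genuine but is equally present in the paper's argument: it invokes P3 to assert that inserting an aligned vertex into $U_i^*$ strictly increases $Q^i$ and P4 to assert that deleting an unaligned vertex does, without ever addressing that the two properties pull in opposite directions for the same insertion, so closing the gap rigorously does require the explicit form $N_\sigma(S,U)/N(S) + N_\sigma(S,U)/N(U)$ rather than the four abstract properties alone, exactly as you anticipate.
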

\begin{proof}
Proofs of the lemma consist of two phases. At the first phase, we prove that there exists a public anomaly $U \subseteq G_0$ which minimize the gap between the $Q$ upper bound and lower bound. According to Lemma~\ref{lmup1} and \ref{lmlow1}, we can observe that the optimum public anomaly $U$ must minimize the following problem:
\begin{displaymath}
\min_{U \subseteq G_0} \sum_{i=1}^N   \Big[Q_\sigma^i(S_i^*, U_i^*) -   Q_\sigma^i(S_i^*, U)\Big]
\end{displaymath}
Next, for the phase two, we are target to prove $U=U^o = U^*$. We first assume $U = U^o \cup U^-$, where $U^o \subseteq \bigcup_{i \in [N]} U_i^*$ and $U^- \subseteq G_0 - \bigcup_{i \in [N]} U_i^*$. Then, we have the optimum solution as follows:
\begin{displaymath}
\sum_{i=1}^N   \Big[Q_\sigma^i(S_i^*, U_i^*) -   Q_\sigma^i(S_i^*, U^o \cup U^-)\Big]
\end{displaymath}
which is equal to maximize $\sum_{i\in [N]}Q_\sigma^i(S_i^*, U^o \cup U^-)$. By the properties (P3) and (P4), we aim to prove that the vertices of $U^-$ must be aligned on subset of $\{S_i^*\}$. We first assume there exist a vertex $v \in V_{U^-}$, however, the vertex $v$ is not aligned with any vertices of $\{S_i^*\}$. Then, we must have the inequality $\sum_{i\in [N]}Q_\sigma^i(S_i^*, U^o \cup U^- - \{v\}) > \sum_{i\in [N]}Q_\sigma^i(S_i^*, U^o \cup U^-)$ by the property (P4), and the result contradicts with the maximum solution $U^o \cup U^-$. Thus all the vertices of $U^-$ must be aligned on subset of $\{S_i^*\}$.

We assume that the public anomaly $U^-$ is not empty. Without loss of generality, we assume there exist a vertex $v \in V_{U^-}$ which is aligned on the $i$-th local data owner $S_i^*$. We know that $v \notin V_{U^*_i}$. We must have the inequality $Q_\sigma^i(S_i^*, U^*_i \cup \{v\}) > Q_\sigma^i(S_i^*, U^*_i)$ by the property (P3), and the result contradicts with the maximum solution $U_i^*$. Thus the public anomaly $U^-$ is empty. We have $U=U^o$, and prove the lemma that $U^o$ is the optimum solution. 
\end{proof}
\begin{lemma}[Theorem 1, from~\cite{donahue2021optimality}]
  An optimal partition $\Pi$ can be created for a set of local data owners $[N]$. First, start with every owner maximizing the local public anomaly $U_i^*$. Then, group the owners together in ascending order of node size, halting at the first owner would increase its error $(\sum_{k \in C} |V_{U^*_k}|-|V_{U^*_i}|)/(|V_{U^*_i}| \sum_{k \in C} |V_{U^*_k}|)  Q_\sigma^i(S_i^*, U_i^*)$ by joining the coalition $C$. The resulting partition $\Pi$ is optimal.
\label{lmthm1}
\end{lemma}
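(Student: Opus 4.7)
The plan is to verify that the greedy procedure yields a partition minimizing the global error $f_{\{U_i^*\}}(\Pi)$ defined in equation~(\ref{eq:ferror}) by a careful exchange argument. Since the objective decomposes coalition-by-coalition, it suffices to show that no local modification of the greedy partition $\Pi$ strictly decreases the total error, after which the optimality of $\Pi$ follows by induction on the owners in their sorted order.

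First I would extract the structural form of the per-owner cost inside a coalition $C$. Writing $x_i = |V_{U_i^*}|$, $y_C = \sum_{k \in C} x_k$, and $q_i = Q_\sigma^i(S_i^*, U_i^*)$, the contribution of owner $i \in C$ to $f_{\{U_i^*\}}(\Pi)$ simplifies to $q_i(1 - x_i/y_C)$, which is monotonically non-decreasing in $y_C$ and vanishes when the coalition is a singleton. Two features are important: (i) for a fixed coalition $C$, every incumbent pays an error that is increasing in $y_C$, so enlarging a coalition never helps its existing members; (ii) when an owner $i$ joins $C$, its own marginal cost is $q_i(1 - x_i/(y_C + x_i))$, while the marginal cost imposed on every incumbent $k \in C$ is $q_k x_k (1/y_C - 1/(y_C+x_i))$. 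These closed-form expressions make the stopping rule transparent: the greedy algorithm halts precisely when the net of these marginal terms turns non-negative.

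Next, the exchange argument. Order the owners as $1, 2, \dots, N$ by ascending $x_i$. Suppose, for contradiction, there exists an optimal partition $\Pi^\star$ agreeing with $\Pi$ on owners $1, \dots, k-1$ but disagreeing on owner $k$. Two cases arise: (a) $\Pi$ places $k$ in the current growing coalition $C$ while $\Pi^\star$ starts a new coalition at $k$; (b) $\Pi$ halts the coalition and starts a new one at $k$ while $\Pi^\star$ keeps $k$ in $C$. In each case I would compute the net change in $f_{\{U_i^*\}}$ incurred by reassigning $k$ according to $\Pi$ and show, using the stopping rule together with the ordering $x_j \leq x_k$ for all $j < k$, that the change is non-positive. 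An induction on $k$ then yields $f_{\{U_i^*\}}(\Pi) \leq f_{\{U_i^*\}}(\Pi^\star)$ for every competing partition.

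The main obstacle is the case analysis in (a): one must carefully balance the reduction in total coalition size that $\Pi^\star$ would realize against the savings each incumbent of $C$ enjoys when $k$ does not join, for owners of potentially very different sizes. The ascending-size ordering is essential here, because it guarantees that every incumbent $k' \in C$ satisfies $x_{k'} \leq x_k$, which lets the terms $q_{k'} x_{k'}(1/y_C - 1/(y_C + x_k))$ be bounded uniformly against owner $k$'s own stopping criterion. Checking this inequality carefully, and verifying that the combinatorial form of the error in~(\ref{eq:ferror}) is exactly that studied in Theorem~1 of~\cite{donahue2021optimality} so that the existing proof can be invoked rather than re-derived, is where the main bookkeeping effort lies.
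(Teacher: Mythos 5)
The paper does not actually prove this lemma: its appendix ``proof'' merely restates the surrounding context and reproduces the error definition~(\ref{eq:ferror}), with the optimality claim imported wholesale from Theorem~1 of~\cite{donahue2021optimality}. Your plan to give a self-contained exchange argument is therefore a genuinely different route, and your algebraic reduction of owner $i$'s contribution to $q_i(1 - x_i/y_C)$ is correct. But that very reduction exposes the problem that sinks the proposal. Since $q_i = Q_\sigma^i(S_i^*,U_i^*) \ge 0$ and $x_i \le y_C$, every summand of~(\ref{eq:ferror}) is nonnegative and equals zero exactly when $C=\{i\}$. Hence the all-singleton partition already attains the global minimum $f=0$, the greedy rule ``halt at the first owner whose error would increase by joining'' rejects every non-trivial merge, and the delicate case analysis you defer to the end of your argument never arises: read against the error function actually defined in the paper, the lemma is vacuously true and the coalition structure it is supposed to produce is degenerate. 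You should have noticed this, because it follows in one line from your own observations (i) and (ii).

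This also defeats your fallback step of ``verifying that the combinatorial form of the error in~(\ref{eq:ferror}) is exactly that studied in Theorem~1 of~\cite{donahue2021optimality} so that the existing proof can be invoked.'' It is not the same form: in that paper a singleton incurs a strictly positive cost (a variance term inversely proportional to the coalition's total sample size, which \emph{decreases} as the coalition grows), and the non-trivial trade-off between that decreasing term and the increasing bias-like term of the form $(N_C-n_j)/(n_j N_C)$ is precisely what their greedy theorem resolves. Equation~(\ref{eq:ferror}) keeps only the increasing term, so the hypotheses of the cited theorem are not met and it cannot be invoked as-is. To close the proof you would either have to restore the missing benefit-of-federating term (and then actually carry out the exchange inequality in your case~(a), which you currently only announce as ``the main bookkeeping effort''), or concede that with the error as written the optimal partition is all singletons and the lemma, while technically true, gives the algorithm nothing.
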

\begin{proof}
The public anomaly $U_i^*$ is the local maximum solution. The $Q$ upper bound is derived intuitively from summing local maximum solutions is greater than the global maximum solution.
For the $Q$ lower bound, the maximum solution over the local set is less than or equal the maximum solution over the global set.
We can observe that the maximum solution must exist between the upper bound and lower bound, and minimize the gap between the two bounds. Note that when the gap is 0, the optimum solution is equal to the solution of lower bound. All of  local solutions $U^*_i$ are the same.

However, in the problem~(\ref{eqopt1}), $Q_\sigma^i(S_i^*, U)$ can not be computed at the server because the forms of function $Q$ and private anomalies $\{S_i^*\}$ are not accessed for the data protection rules. We employ the framework for optimality and stability in federated learning to address the problem~(\ref{eqopt1})~\cite{donahue2021optimality}. We define a coalition $C \subseteq [N]$ is a subset of local data owners, and all owners $[N]$ are partitioned into coalition set $\{C\}$. We consider the gap between two bounds~(\ref{eqopt1}) as errors across owners. By the properties P3 and P4, we know that the value of $Q_\sigma^i(S_i^*, U)$ changed with the size of $U$ for the fixed $S_i^*$. Given a partition $\Pi$ of $[N]$, we define its error below:
\begin{equation}
  f_{\{U_i^*\}}(\Pi) = \sum_{C \in \prod} \sum_{i \in C} |V_{U_i^*}| \cdot \bigg(\frac{\sum_{k \in C} |V_{U^*_k}|-|V_{U^*_i}|}{|V_{U^*_i}|\cdot \sum_{k \in C} |V_{U^*_k}|}\bigg)  Q_\sigma^i(S_i^*, U_i^*)
  \label{eq:ferror1}
\end{equation}
\end{proof}
By Lemma~\ref{lmthm1}, we select a coalition $C \in \Pi$ with the minimum error, and achieve the desired public anomaly $U=\bigcup_{i \in C} U_{i}^*$. The public anomaly $U$ is the optimal solution for $\sum_{i\in [N]}Q_\sigma^i(S_i^*,\cdot)$.

By the theorem, our algorithm guarantees on detecting the most anomaly subgraphs on multiple private attributed networks. \texttt{\textsc{FadMan}}'s time complexity is \textit{$O(kN|V|^2)$}, where $k$ is the number of iterations, $N$ is the number of networks, $V$ is the number of nodes of the network. In practice, the implementations of $F_\alpha$ and $Q_\sigma$ are normalized because the function value ranges are different.

\subsection{Datasets}
We constructed two multi-network scenarios based on the following five real datasets: Computer network, Car-hailing \& Bicycle-sharing \& Subway (Metro) \& POI (Point of Interest). POI is the groud truth dataset.

(1) \textbf{Computer network:} An Internet company provided browsing logs from the *\textbf{$\emph{edu.cn}$} websites, which involved a total of $996$ websites, $131,205$ IPs, and the time range was from May 31, 2014 to May 31, 2015, with a total of $3,978,073$ logs. In this time range, for a certain website/IP on the t-th day, we take the number of logs related to that website/IP on the t-th day as the observed value $c_t$. By comparing $c_t$ with the daily $c_i, (i<t)$ of the website/IP before the t-th day, the empirical p-value of the website/IP on the t-th day is obtained. Therefore, for all websites/IPs involved in the daily log, we have their corresponding p-value snapshots. We divided these logs into six parts, each containing two months of data, and constructed six computer networks based on this.
(2) \textbf{Car-hailing:} We use the online car-hailing dataset provided by a transportation company. This dataset was collected $58,674$ online car-hailing orders in Tianjin on December 20, 2019. Each record contains the time of the order and the start and end of the itinerary. We processed the dataset as follows: First, we divide the dataset into 24 parts according to the time (hours). For hourly data, we used the start point and endpoint of the itinerary as a node in the network and regarded the start point and endpoint of an itinerary is connected so that we can obtain the corresponding itinerary network of this hour. For the node $v$ in the itinerary network of the hour $t$, we regard the number of times the node $v$ is used as the start point or the endpoint in the hour $t$ as the observed value $c_t$, and regard observed values $c_i, (i \neq t)$ of $v$ in other hours as compared values, the empirical p-value of $v$ at hour $t$ can be obtained. As a result, we obtained 24 car-hailing itinerary networks.
(3) \textbf{Bicycle-sharing: } The dataset is also provided by a transportation company. It collected 229,814 bicycle-sharing orders in Tianjin on December 20, 2019. Each order contains time, start location, and end location. We perform the same processing on this dataset as the car-hailing dataset.
(4) \textbf{Subway: }This dataset collected subway traffic data in Tianjin on December 20, 2019. The data recorded the hourly passenger flow of 143 subway stations in Tianjin that day, totaling 3,432. According to the Tianjin subway route, connections are established between interconnected subway stations to form a subway network with 143 nodes and 153 edges. For the subway station $s$ at hour $t$, we regard the passenger flow of $s$ in that hour as the observed value $c_t$. Next, perform the same processing as the car-hailing dataset. As a result, we obtained 24 subway networks.
(5) \textbf{POI: } This dataset comes from Baidu Maps, recording the information of 242,189 interest points in Tianjin. Each message contains the name, location, telephone number, longitude, latitude, id, and keyword. This dataset can map the longitude and latitude information of the Car-hailing, Bicycle-sharing, and Subway datasets to specific locations, thereby obtaining more realistic results.

\begin{figure}[t]
    \centering 
    \includegraphics[width=1\linewidth]{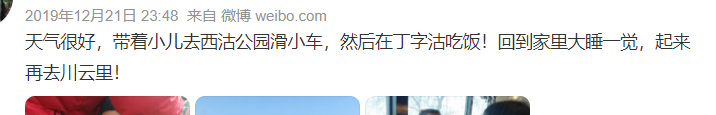}
    \vspace{-0.3cm}
    \caption{The translation of the weibo: Beautiful weather, took my son to Xigu Park for a little scoot, then dinner at Dingzigu! Came home for a big nap, got up and went to Chuangyunli again!
}
\label{fig:weibo1}
\end{figure}

\begin{figure}[t]
    \centering 
    \includegraphics[width=1\linewidth]{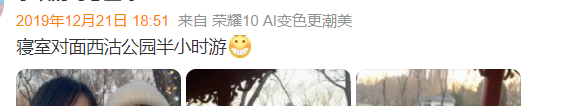}
    \caption{The translation of the weibo: I spent half an hour playing in the Xigu Park opposite my dormitory.
}
 \label{fig:weibo2}
 \vspace{-0.3cm}
\end{figure}

\begin{figure}[t]
    \centering 
    \includegraphics[width=1\linewidth]{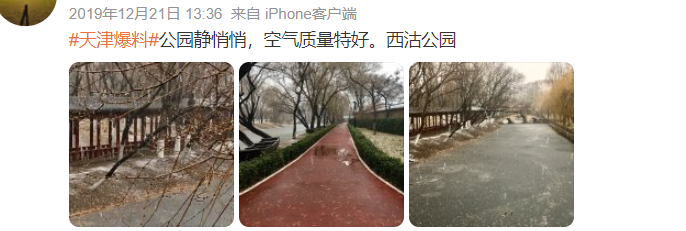}
    \caption{The translation of the weibo: $\#$Tianjin Explosion$\#$ The park is very quiet and the air quality is exceptionally good. Xigu Park.
}
\label{fig:weibo3}
\vspace{-0.3cm}
\end{figure}

\begin{figure}[t]
    \centering 
    \includegraphics[width=1\linewidth]{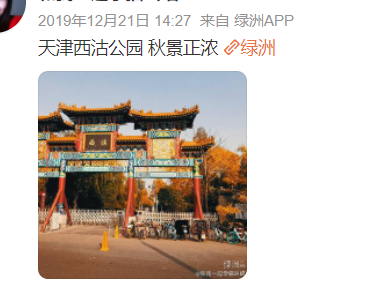}
    \caption{The translation of the weibo: Autumn is in full swing in Tianjin Xigu Park.
}
\label{fig:weibo4}
\vspace{-0.3cm}
\end{figure}

\subsection{Metrics}
\textbf{Metrics for evaluating algorithms' ability to detect correlated anomalies on multiple attributed networks:} The results of FADMAN are anomaly subgraphs. The anomaly subgraph contains only a few normal nodes, most of which are abnormal. For example, if an IP has a record of attacking a website, it is identified as an abnormal node. If it is in the algorithm result, it is a true negative; otherwise, it is a false positive. For a normal node that has not attacked the website, if it is in the result, it is a false negative; otherwise, it is a true positive. We compare the results with ground truth to get these metrics.
\textbf{Metrics for evaluating the algorithms' ability to detect anomalies on an attributeless network:} We regard $G_6$ as an attributeless network by setting p-values of its nodes to $1$. We obtain the correlated anomaly subgraphs of the first five networks and their public anomaly subgraph, then align the public anomaly subgraph to $G_6$ to obtain the anomaly subgraph of $G_6$. Anchor\_Count indicates the number of real abnormal anchor links between the public anomaly subgraph and $G_6$ detected by the algorithm we proposed.

\subsection{Weibo images in the traffic datasets case study}
The Weibo images mentioned in the traffic dataset case are shown in Figure\ref{fig:weibo1}, \ref{fig:weibo2}, \ref{fig:weibo3}, and \ref{fig:weibo4}.

\begin{figure*}[t]
\setlength{\abovecaptionskip}{0.1cm}
\setlength{\belowcaptionskip}{-0.1cm}
  \centering
  \includegraphics[width=13.5cm]{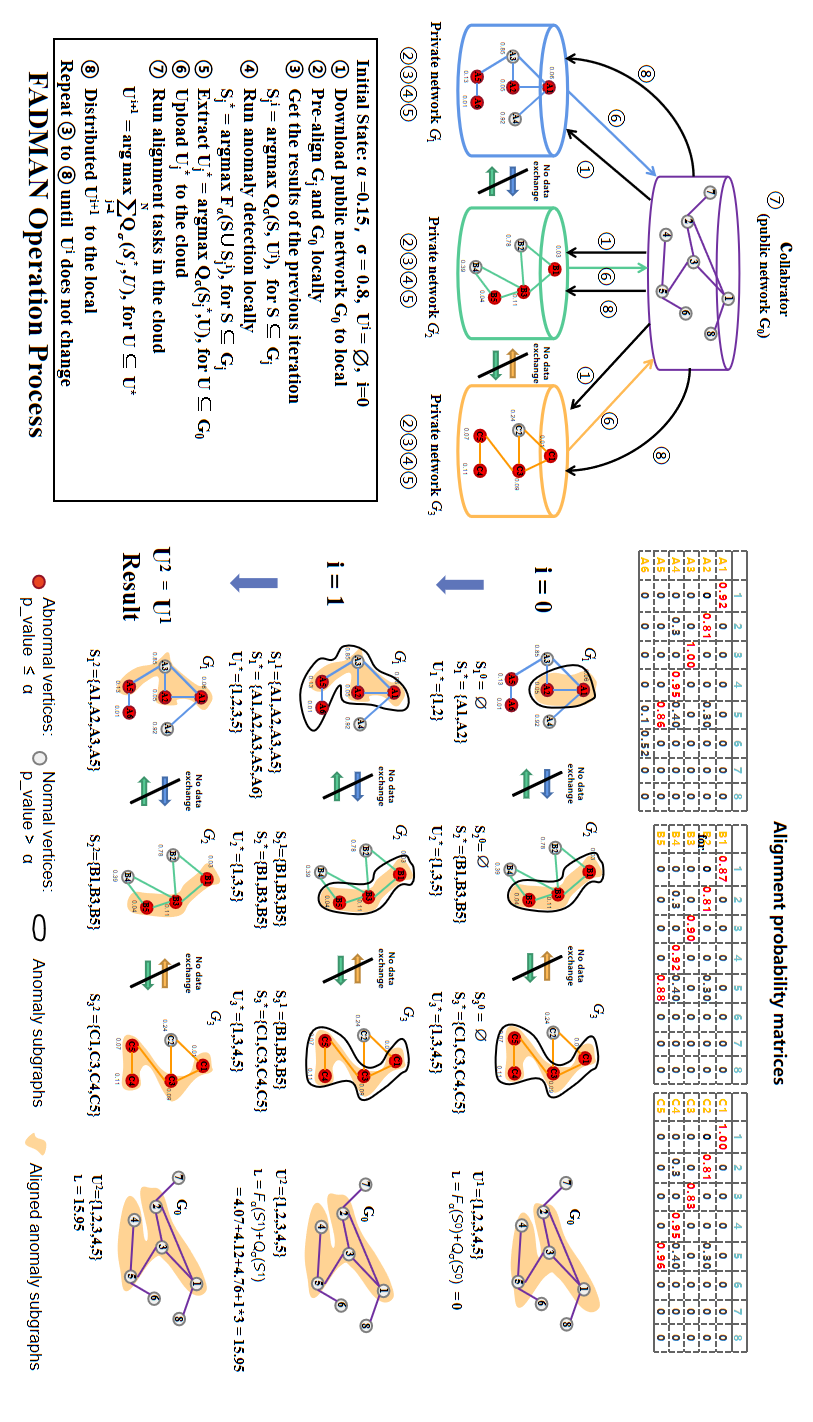}
  \caption{\textbf{Illustration of our algorithm.}
  The subgraphs within solid black line freeform shapes are the largest anomaly subgraphs (i.e., $\max F_\alpha$), and the yellow shaded subgraphs represent the aligned subgraphs (i.e., $\max Q_\sigma$). The vertex values are empirical p-values. }
  \label{fig:FADMAN}
  \vspace{-5mm}
\end{figure*}

\end{document}